\documentclass{article}
\usepackage[final]{colm2026_conference}

\usepackage{microtype}
\usepackage{fontawesome5}
\usepackage[T1]{fontenc}
\usepackage[utf8]{inputenc}

\usepackage{amsmath}
\usepackage{amssymb}
\usepackage{amsthm}
\usepackage{algorithm}
\usepackage{algpseudocode}

\newtheorem{proposition}{Proposition}

\usepackage{lineno}

\usepackage{booktabs}
\usepackage{multirow}
\usepackage{graphicx}
\usepackage{subcaption}
\DeclareCaptionJustification{softraggedright}{\rightskip=0pt plus 2em\relax}
\usepackage{colortbl}
\usepackage{pifont}
\usepackage{enumitem}
\usepackage{placeins}
\usepackage{wrapfig}

\usepackage{tikz}
\usetikzlibrary{arrows.meta, calc, decorations.pathreplacing, backgrounds, shapes.misc, positioning, fit}
\usepackage{hyperref}
\definecolor{darkblue}{rgb}{0, 0, 0.5}
\hypersetup{colorlinks=true, citecolor=darkblue, linkcolor=darkblue, urlcolor=darkblue}
\usepackage{cleveref}

\newcommand{\method}{\textsc{Goose}}
\newcommand{\gooseone}[4]{\begin{scope}[shift={(#1,#2)},scale=#3]
  \fill[#4]
    (-1.05,0.00)
    .. controls (-0.55,-0.14) and (0.05,-0.16) .. (0.42,-0.06)
    .. controls (0.62,0.00) and (0.80,0.10) .. (0.98,0.24)
    .. controls (1.06,0.30) and (1.16,0.28) .. (1.22,0.24)
    .. controls (1.14,0.36) and (1.02,0.40) .. (0.92,0.36)
    .. controls (0.70,0.28) and (0.52,0.18) .. (0.30,0.14)
    .. controls (0.05,0.55) and (-0.25,0.85) .. (-0.42,0.95)
    .. controls (-0.40,0.70) and (-0.45,0.40) .. (-0.60,0.16)
    .. controls (-0.80,0.10) and (-0.95,0.06) .. (-1.05,0.00) -- cycle;
  \fill[#4]
    (0.15,-0.02) .. controls (-0.05,-0.30) and (-0.25,-0.42) .. (-0.42,-0.46)
    .. controls (-0.28,-0.28) and (-0.12,-0.10) .. (0.02,0.02) -- cycle;
\end{scope}}
\newcommand{\gooseflock}{\raisebox{1.00ex}{\begin{tikzpicture}[scale=0.27,baseline=0pt]
  \gooseone{0}{0}{1}{pldblue}            % lead   (spine)
  \gooseone{-2.3}{0.56}{0.88}{trorange}  % upper trailing (branch)
  \gooseone{-2.3}{-0.56}{0.88}{trorange} % lower trailing (branch)
\end{tikzpicture}}}
\newcommand{\taumetric}{\ensuremath{\tau}}
\newcommand{\argmax}{\operatorname*{arg\,max}}
\newcommand{\pld}{\textsc{PLD}}
\newcommand{\tr}{\textsc{TR}}
\definecolor{bestcolor}{HTML}{E8F5E9}
\definecolor{secondcolor}{HTML}{FFF3E0}
\definecolor{pldblue}{RGB}{21,101,192}
\definecolor{trorange}{RGB}{230,81,0}
\definecolor{accgreen}{RGB}{46,125,50}
\definecolor{rejred}{RGB}{198,40,40}

\title{\gooseflock\,\ Goose: Anisotropic Speculation Trees \\ for Training-Free Speculative Decoding}

\author{Tao Jin\thanks{Corresponding author.} \quad Phuong Minh Nguyen \quad Naoya Inoue \\
Japan Advanced Institute of Science and Technology (JAIST) \\
\texttt{\{morgan, phuongnm, naoya-i\}@jaist.ac.jp} \\
\faGithub~\href{https://github.com/2kinto/GOOSE_Speculative_Decoding}{\texttt{GOOSE\_Speculative\_Decoding}}}

\begin{document}

\ifcolmsubmission
\linenumbers
\fi

\maketitle

% ═══════════════════════════════════════════════════════════════════════
% ABSTRACT
% ═══════════════════════════════════════════════════════════════════════
\begin{abstract}
Speculative decoding accelerates large language model inference by drafting multiple candidate tokens and verifying them in a single forward pass.
Candidates are organized as a tree: deeper trees accept more tokens per step, but adding depth requires sacrificing breadth (fallback options) under a fixed verification budget.
Existing training-free methods draft from a single token source and shape their trees without distinguishing candidate quality across origins.
We observe that two common training-free token sources---$n$-gram matches copied from the input context, and statistical predictions from prior forward passes---differ sharply in acceptance rate (${\sim}6\times$ median gap, range $2$--$18\times$ across five models and five benchmarks).
We prove that when such a quality gap exists, the optimal tree is \emph{anisotropic} (asymmetric): reliable tokens should form a deep chain while unreliable tokens spread as wide branches, raising the depth ceiling of balanced trees.
We realize this structure in \method{}, a training-free framework that builds an adaptive \textbf{spine tree}---a deep chain of high-acceptance context-matched tokens with wide branches of low-acceptance alternatives at each node.
The resulting tree provably accepts at least as many tokens per step as either source alone.
On five LLMs (7B--33B) and five benchmarks, \method{} achieves $1.9$--$4.3\times$ lossless speedup, outperforming balanced-tree baselines by $12$--$33\%$ under the same budget.
\end{abstract}

% ═══════════════════════════════════════════════════════════════════════
% 1. INTRODUCTION
% ═══════════════════════════════════════════════════════════════════════
\section{Introduction}
\label{sec:intro}

Speculative decoding~\citep{leviathan2023fast,chen2023accelerating} accelerates autoregressive LLM inference by letting a drafter propose several candidate tokens at once.
The target model verifies them in a single forward pass, accepting correct guesses at no extra cost while keeping the output provably identical to standard decoding (\emph{lossless};~\citealp{leviathan2023fast}).

The simplest draft is a single \emph{chain}: efficient when the drafter is accurate, but a single rejection loses the entire continuation.
\emph{Tree-structured speculation}~\citep{miao2024specinfer,cai2024medusa} hedges this risk with multiple candidates organized as a tree, whose \textbf{tree attention mask} restricts each candidate's attention to its ancestors so the target model scores every root-to-leaf path in one forward pass.
Designing a good \emph{draft tree} then means allocating a fixed node budget~$B$ between \emph{depth} (longer paths) and \emph{breadth} (more fallbacks).

Among training-free methods, three representative approaches illustrate the design space and its limits.
\textit{Token Recycling}~(\tr{};~\citealp{luo2025tokenrecycling}) retrieves statistically likely next tokens from the target model's own logits, producing wide, shallow trees that hedge broadly but sacrifice depth.
\textit{Prompt Lookup Decoding}~(\pld{};~\citealp{saxena2023prompt}) copies $n$-gram-matched continuations from the input context, producing a deep chain accepted at high rates, but with no fallback when the match breaks or is absent.
Sequoia~\citep{chen2024sequoia} optimizes the depth--breadth trade-off by dynamic programming, but under a source-blind acceptance model: acceptance depends on a candidate's rank among siblings, not on its provenance.
Such a model cannot favor high-acceptance tokens when sources of different reliability coexist.

This source-blind assumption forfeits a key structural advantage once both sources coexist.
We measure that context-matched (\pld{}) tokens are accepted $2$--$18\times$ more often than transition (\tr{}) tokens (median~${\sim}6\times$ across five models and five benchmarks; \Cref{sec:theory}).
This \emph{acceptance heterogeneity} suggests placing high-acceptance tokens along a deep \emph{spine} and low-acceptance tokens as wide \emph{branches} at every spine node: the spine pushes depth while branches provide fallback when it breaks early.
We instantiate this insight in \method{}, a training-free framework that builds an adaptive \textbf{spine tree} (\Cref{fig:overview}): a deep chain of context-matched tokens, with wide branches of transition alternatives forking from every spine node, verified by a unified greedy walk in one forward pass.
We prove that the spine tree's expected yield (accepted tokens per cycle) is at least that of either source used alone (\Cref{prop:monotonicity}), and design runtime adaptations that preserve this property when one source is absent or dominant (\Cref{sec:consensus}).

Our contributions:
\begin{enumerate}[nosep,leftmargin=*]
\item We formalize acceptance heterogeneity between two training-free token sources and prove that the optimal tree is \emph{anisotropic}, allocating depth to reliable tokens and breadth to unreliable ones, which raises the depth ceiling of single-source trees (\Cref{sec:theory}).
\item We design a confidence-adaptive, \textbf{training-free} tree construction that combines \pld{} and \tr{} into a single spine tree with a non-degradation guarantee on expected yield (\Cref{sec:method}).
\item Experiments on five models (7B--33B) across five benchmarks show $1.9$--$4.3\times$ lossless speedup, outperforming isotropic-tree baselines by $12$--$33\%$ under equal node budgets (\Cref{tab:topology}).
\end{enumerate}

% ═══════════════════════════════════════════════════════════════════════
% 2. RELATED WORK
% ═══════════════════════════════════════════════════════════════════════
\section{Related Work}
\label{sec:related}

Speculative decoding~\citep{leviathan2023fast,chen2023accelerating,stern2018blockwise} accelerates autoregressive LLM inference by guessing multiple future tokens and verifying them in a single forward pass; \citet{xia2024unlocking} provide a comprehensive survey.
A key extension is \emph{tree-structured verification}~\citep{miao2024specinfer,cai2024medusa}: candidates are organized into a tree and checked simultaneously, after which the longest accepted path is selected and one bonus token is appended.

\paragraph{Draft-model methods.}
Representative approaches include EAGLE~\citep{li2024eagle,li2024eagle2}, which trains feature-level draft heads, and Medusa~\citep{cai2024medusa}, which attaches parallel prediction heads; EAGLE-3~\citep{li2025eagle3} replaces feature prediction with direct token prediction and multi-layer feature fusion, achieving up to ${\sim}6.5\times$ speedup.
All such methods~\citep[\emph{inter alia}]{miao2024specinfer,kim2023speculative,zhou2024distillspec} require per-model training or auxiliary models, limiting deployment flexibility.

\paragraph{Training-free methods.}
Training-free drafters avoid auxiliary models entirely by mining candidate tokens from data already available at inference time.
The simplest strategy copies tokens from reference documents~\citep{yang2023inference} or from the prompt and prior output via $n$-gram matching~\citep{saxena2023prompt}; such copies are cheap but limited to recurring patterns.
A broader class retrieves candidates from external datastores---REST~\citep{he2024rest} uses a suffix array, SuffixDecoding~\citep{oliaro2024suffix} a suffix tree over the prompt and prior outputs (compared head-to-head in \Cref{sec:suffix}), while Lookahead Decoding~\citep{fu2024break} constructs $n$-grams on the fly through parallel Jacobi iterations.
Other methods refine token selection~\citep{somasundaram2024pldplus,le2025spectra} or reuse the model's own early layers~\citep{zhang2024draft,elhoushi2024layerskip}.
All the above improve \textit{which tokens to draft}.
\method{} asks the orthogonal question of \textit{how the verification tree should be shaped} given sources with very different acceptance rates; better token selection would only widen the gap the spine tree exploits.

\paragraph{Tree topology optimization.}
Sequoia~\citep{chen2024sequoia} formalizes the depth--breadth trade-off introduced above: under its \emph{positional acceptance assumption}, acceptance depends only on a token's rank among siblings, so its DP produces non-uniform trees while remaining \emph{source}-blind---one acceptance vector for all candidates.
We call a balanced tree with the same branching at every depth \emph{isotropic} and use it as our equal-budget topology control (\Cref{sec:topology}); we additionally compare against Sequoia's official DP-optimized trees in \Cref{sec:sequoia-headtohead}.

EAGLE-2~\citep{li2024eagle2} prunes low-confidence branches via the draft head's per-token scores, growing deeper along well-predicted paths; this adapts the shape to within-drafter quality variation but requires a trained head and lacks optimality guarantees.
\tr{}~\citep{luo2025tokenrecycling} applies a similar principle within its transition lookup table: higher-ranked candidates receive more children than lower-ranked ones, producing non-uniform trees without training.

Despite producing non-uniform trees, these methods optimize within a single draft source, so the quality differences they exploit remain \emph{within-source}.
\method{} exploits a qualitatively different axis: the \emph{cross-source} gap between context-matched and transition tokens ($2$--$18\times$; \Cref{sec:theory}).
Extending Sequoia's DP to this heterogeneous setting yields an \emph{anisotropic} optimum, in which reliable tokens form a deep chain while unreliable ones spread as wide branches, confirmed by 12--33\% gains (\Cref{tab:topology}).

% ═══════════════════════════════════════════════════════════════════════
% THEORETICAL ANALYSIS
% ═══════════════════════════════════════════════════════════════════════
\section{Optimal Topology under Heterogeneous Acceptance}
\label{sec:theory}

The optimal tree shape follows directly from the acceptance asymmetry between the two draft sources.
We build on Sequoia's framework~\citep{chen2024sequoia}, which models each draft token as independently accepted with a source-independent probability and optimizes the tree via dynamic programming; we extend the model to two sources with different acceptance rates.
This section formalizes that asymmetry, derives the expected yield of the spine tree (\Cref{prop:yield}), proves that branches should be concentrated near the root (\Cref{prop:allocation}), and shows that the resulting tree strictly dominates any single-source alternative (\Cref{prop:dominance}).

\paragraph{Heterogeneous acceptance model.}
Following this model, we associate each draft source with a per-token acceptance probability:
$p_s$ for context-matched (\emph{s}pine) tokens and $p_t$ for \emph{t}ransition tokens, with $p_s > p_t$.
Each spine token is modeled as independently accepted with probability~$p_s$.
Consecutive tokens are positively correlated in practice; this leaves the bound in \Cref{eq:yield} valid, only more conservative (validated in all 24 settings, \Cref{app:eq5-verify}).

\begin{figure*}[!t]
\noindent
\begin{minipage}[t]{0.487\textwidth}
\vspace{0pt}
\centering
\includegraphics[width=1.00\textwidth]{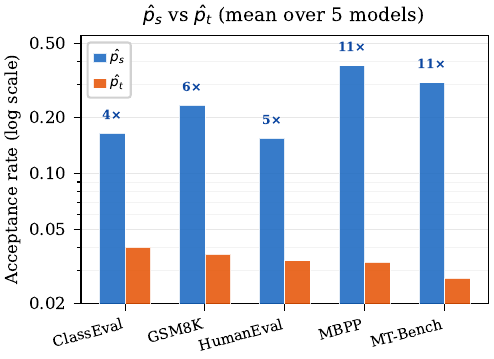}
\captionsetup{justification=softraggedright,singlelinecheck=false}
\vspace{-2pt}
\captionof{figure}{\textbf{Acceptance heterogeneity: context-matched spine tokens are accepted far more often than transition branch tokens.}
\mbox{$\hat{p}_s$ / $\hat{p}_t$} = fraction of \pld{} spine / \tr{} branch draft tokens accepted (greedy verification logs); bars: means over the five models, log scale; $N\times$: the ratio, which exceeds $1$ in all $25$ settings---the empirical basis for the anisotropic design.}
\label{fig:heterogeneity}
\end{minipage}%
\hfill
\begin{minipage}[t]{0.435\textwidth}
\vspace{0pt}
\centering
\includegraphics[width=0.927\textwidth]{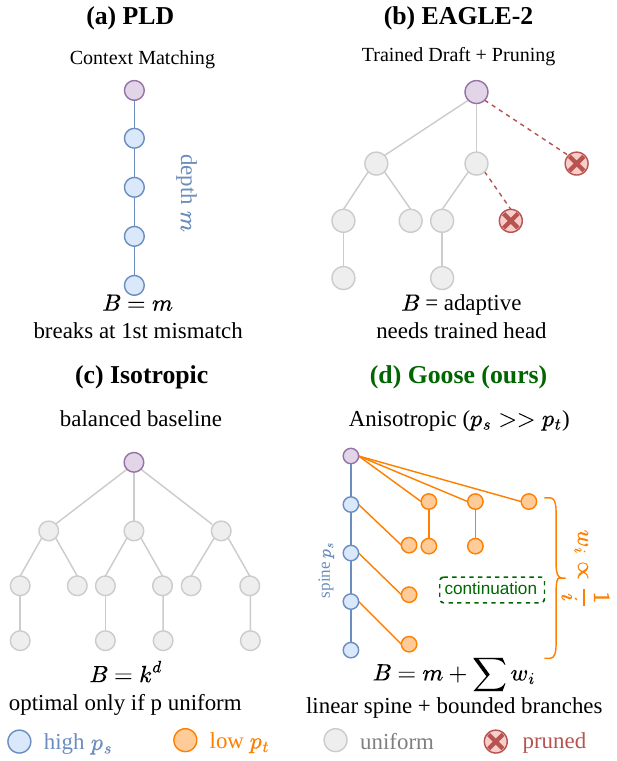}
\captionsetup{justification=softraggedright,singlelinecheck=false}
\vspace{-4pt}
\captionof{figure}{\textbf{Speculation-tree topologies} ($B$ fixed).
\textbf{(a)}~\pld{} spine; \textbf{(b)}~EAGLE-2 pruned tree;
\textbf{(c)}~isotropic tree, our equal-budget control;
\textbf{(d)}~\method{}'s anisotropic spine tree (\Cref{sec:tree}).}
\label{fig:overview}
\end{minipage}
\vspace{-16pt}
\end{figure*}

To ground the model, we measure acceptance rates from greedy verification logs across all five models and five benchmarks (\Cref{fig:heterogeneity}).
The empirical spine acceptance~$\hat{p}_s$ (fraction of context-matched draft tokens accepted by the target model) ranges from $0.07$ to $0.60$ (median~$0.21$), while the empirical transition acceptance~$\hat{p}_t$ lies in $0.03\text{--}0.05$ (median~$0.033$).
The resulting heterogeneity ratio $\hat{p}_s/\hat{p}_t$ spans $2\times$ to $18\times$ (median~$\approx 6\times$; \Cref{fig:heterogeneity}).
Note that $p_s$ and $p_t$ are \emph{population} rates, used in this yield analysis and in tree construction only: verification (\Cref{sec:continuation}) is pure argmax token-identity matching.
When \Cref{prop:dominance} ``upgrades'' a slot from $p_t$ to $p_s$, this describes which source class occupies it in the model, not a runtime score; a branch continuing after a spine break keeps its own class rate $p_t$ and inherits nothing from the spine.

\paragraph{Key quantities.}
Before stating the main results, we fix notation.
The spine consists of $m$ context-matched tokens forming a linear chain; at each spine node~$i$ ($0 \le i \le m{-}1$), $w_i$ transition candidates fork off as branches.
The total number of candidates---spine plus all branches---is bounded by the node budget~$B$ (i.e., $m + \sum_i w_i \le B$; branch extension chains of up to $D{-}1$ nodes are idealized as budget-free here, while the deployed builder charges every node to $B$, \Cref{sec:tree}).
Let $\tau$ denote the number of tokens accepted in one verification cycle~\citep{leviathan2023fast,chen2024sequoia}, which we call the \emph{compression ratio}; the \emph{expected yield} is then $\mathbb{E}[\tau]$.
We also define two recurring terms.
Given $w_i$ branch candidates at spine node~$i$, each accepted independently with probability~$p_t$, the probability that \emph{at least one} is correct is
$\phi_i \;=\; 1 - (1-p_t)^{w_i}$.
Intuitively, $\phi_i$ grows quickly with $w_i$ even when $p_t$ is small.
Each accepted branch token can extend into a chain of further transition tokens (maximum depth $D{=}6$ in all experiments), reaching depth~$k$ with probability~$p_t^k$; summing gives the expected extension length:
$\bar\ell \;=\; \sum_{k=1}^{D-1} p_t^k \;=\; p_t(1-p_t^{D-1})/(1-p_t)$,
which ranges from~$0$ (when~$p_t \to 0$) to~$D{-}1$ (when~$p_t \to 1$).

We now bound $\mathbb{E}[\tau]$ via the \emph{indicator random variable} (IRV) decomposition~\citep{leviathan2023fast,chen2024sequoia}: the expected accepted token count equals the sum over all tree depths of the probability that the accepted path reaches and accepts that depth.

\begin{proposition}[Spine Tree Expected Yield]
\label{prop:yield}
Under the heterogeneous model with spine length $m \geq 1$ and branch widths $\{w_i\}_{i=0}^{m-1}$ (total budget $m + \sum_i w_i \leq B$), the expected accepted path length satisfies:
\begin{equation}
\label{eq:yield}
\mathbb{E}[\tau] \;\geq\; \underbrace{\textstyle\sum_{i=1}^{m} p_s^i}_{\textup{spine}} \;+\; \underbrace{\textstyle\sum_{i=0}^{m-1} p_s^{i}(1{-}p_s)\,\phi_i\,(1{+}\bar\ell)}_{\textup{continuation synergy}} \;+\; 1
\end{equation}
with $\phi_i$ and $\bar\ell$ as defined above.
The bound is tight when the branches are independent chains.
\end{proposition}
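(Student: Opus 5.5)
We use the IRV decomposition: $\tau = 1 + \sum_{d\ge 1} \mathbf{1}[\text{the accepted path reaches depth } d]$, where the $+1$ is the bonus token the target model always appends. We then partition the events "the accepted path contains a draft token at depth $d$" by the depth $i$ at which the spine is first rejected. Let $T$ be the number of spine tokens accepted under greedy verification. Spine tokens are independently accepted with probability $p_s$, so $\Pr[T\ge i]=p_s^i$ and $\Pr[T=i]=p_s^i(1-p_s)$ for $0\le i\le m-1$.

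\textbf{Step 1: spine term.} The indicators $\mathbf{1}[T\ge i]$ for $i=1,\dots,m$ count the accepted spine tokens. Taking expectations gives $\mathbb{E}[T]=\sum_{i=1}^m p_s^i$, the first term of \Cref{eq:yield}.

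\textbf{Step 2: continuation after a spine break.} Condition on $T=i$ with $i\le m-1$, so spine node $i$ is the last accepted node and spine token $i{+}1$ is rejected. The greedy walk can then descend into one of the $w_i$ transition branches forking at node $i$. Under the model these are accepted independently with probability $p_t$, so at least one is accepted with probability $\phi_i$. Given an accepted branch token, its extension chain reaches depth $k$ with probability $p_t^k$, for $k\le D-1$. The expected extra tokens beyond the spine are therefore at least $\phi_i(1+\bar\ell)$. Multiplying by $\Pr[T=i]$ and summing over $i$ gives the continuation-synergy term. The $T=m$ case, where the full spine is accepted, contributes nothing beyond the spine and the bonus. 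Dropping it and any further contributions can only weaken the bound, so we obtain an inequality.

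\textbf{Step 3: independence and tightness.} The main obstacle is justifying that conditioning on $T=i$ leaves the branch acceptance probabilities at $p_t$, and that branch and spine events do not double-count. On double-counting: at a given depth the walk follows exactly one child, so the contributions from Steps 1 and 2 correspond to disjoint sample paths. On independence: the heterogeneous model assigns independent per-token acceptances by source class, so the $p_t$ rates are unaffected by the spine outcome. This matches the paper's remark that a branch "inherits nothing from the spine."

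We obtain a lower bound rather than an equality for two reasons. First, only one accepted branch is counted, via $\phi_i$, and its extension is treated as a single chain. If multiple branches are accepted, or extensions fan out, the greedy longest-path choice can only increase $\tau$. Second, we omit any branches at the spine tip. When each branch is an independent chain, i.e.\ no sub-branching, the accepted continuation equals exactly one chain's length. In that case each inequality in Step 2 becomes an equality, which gives the stated tightness.
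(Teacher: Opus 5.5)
Your proposal is correct and follows essentially the same route as the paper's proof. Both use the IRV decomposition with $\mathbb{E}[T]=\sum_{i=1}^m p_s^i$ for the spine, weight the break at depth $i{+}1$ by $p_s^i(1-p_s)$ times the conditional branch yield $\phi_i(1+\bar\ell)$, add the bonus $+1$, and attribute the slack to extra (sub-)branching. One small tidy-up for the tightness claim: the walk must commit to an accepted branch without looking ahead at its extension, as the paper's greedy walk does and as real argmax verification forces, since at most one sibling can match. Under the ``longest-path choice'' among several independently accepted siblings that your slack discussion invokes, the bound would stay strict even for pure chains.
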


\noindent\textit{Proof sketch.}
By the IRV decomposition~\citep{leviathan2023fast,chen2024sequoia}, $\mathbb{E}[\tau]$ is a sum over depths of the probability the accepted path reaches that depth: the spine term is $\sum_i p_s^i$; in the synergy term, $p_s^{i}(1{-}p_s)$ is the probability the spine breaks exactly at depth $i{+}1$, $\phi_i$ the probability some branch there succeeds, and $(1{+}\bar\ell)$ its expected token contribution.
The final $+1$ is the standard bonus token; deeper sub-branching only adds contributions, so Eq.~\ref{eq:yield} is a lower bound.
Full derivation in \Cref{app:yield-proof}. \hfill$\square$

Since the spine tree includes both the pure \pld{} chain ($m{=}B$, all~$w_i{=}0$) and a near-pure \tr{} tree ($m{=}1$) as special cases, its yield never falls below either standalone strategy; $p_s{\to}1$ recovers pure spine, $p_s{\to}p_t$ recovers isotropic.
In practice, the spine ratio adapts via an exponential moving average of the acceptance rate (\Cref{sec:consensus}).

\begin{proposition}[Optimal Branch Allocation]
\label{prop:allocation}
The allocation $\{w_i^*\}$ maximizing the synergy term under total branch budget $B_t = B - m$ satisfies:
\begin{equation}\label{eq:alloc}
w_i^* \;=\; w_0^* \;-\; \frac{|\ln p_s|}{|\ln(1{-}p_t)|} \cdot i
\end{equation}
where $w_0^*$ is set by the budget constraint $\sum w_i = B_t$ (interior solution; otherwise its water-filling truncation, \Cref{app:powerlaw-allocation}).
The allocation decreases linearly with depth---deeper nodes are reached less often ($p_s^i$ decays exponentially)---and the slope $|\ln p_s|/|\ln(1{-}p_t)|$ equalizes the marginal value of one extra branch across positions.
\end{proposition}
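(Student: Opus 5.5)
We treat the synergy term of \Cref{prop:yield} as a separable objective in the branch widths and maximize it under the budget constraint. Since $(1-p_s)(1+\bar\ell)$ is a positive constant independent of $\{w_i\}$, it suffices to maximize
\[
F(w) \;=\; \sum_{i=0}^{m-1} p_s^{i}\bigl(1-(1-p_t)^{w_i}\bigr)
\quad\text{subject to}\quad \sum_{i=0}^{m-1} w_i = B_t,\; w_i \ge 0 .
\]
We first relax $w_i$ to real values. Writing $q = 1-p_t \in (0,1)$, each summand $p_s^i(1-q^{w_i})$ is strictly increasing and strictly concave in $w_i$, because its second derivative is $-p_s^i (\ln q)^2 q^{w_i} < 0$. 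Hence $F$ is a separable concave function on a convex polytope, the KKT conditions are necessary and sufficient, and the maximizer is unique.

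Next we form the Lagrangian $F(w) - \lambda(\sum_i w_i - B_t)$. Stationarity in each coordinate gives
\[
p_s^{i}\,|\ln q|\, q^{w_i} \;=\; \lambda .
\]
This is exactly the statement that the marginal value of one more branch is equal across all positions. Taking logarithms gives $i\ln p_s + w_i \ln q + \ln|\ln q| = \ln\lambda$, and therefore
\[
w_i \;=\; \frac{\ln\lambda - \ln|\ln q|}{\ln q} \;-\; \frac{\ln p_s}{\ln q}\, i .
\]
Both $\ln p_s$ and $\ln q$ are negative, so $\ln p_s/\ln q = |\ln p_s|/|\ln(1-p_t)|$. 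Setting $w_0^*$ equal to the $i$-independent term yields \Cref{eq:alloc}. The constant $\lambda$, equivalently $w_0^*$, is fixed by $\sum_i w_i^* = B_t$, namely $m w_0^* - \tfrac{|\ln p_s|}{|\ln(1-p_t)|}\cdot\tfrac{m(m-1)}{2} = B_t$. This equation has a unique solution because the left-hand side is affine and increasing in $w_0^*$.

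The main obstacle is the boundary case. The interior formula is only valid if $w_{m-1}^* \ge 0$. Otherwise we apply complementary slackness with multipliers $\mu_i \ge 0$ for the constraints $w_i \ge 0$. This gives $w_i^* = \max\{0,\, w_0^* - c\,i\}$ with $c = |\ln p_s|/|\ln(1-p_t)|$, where $w_0^*$ is recomputed so that the positive parts sum to $B_t$. The active set is $\{0,\dots,k-1\}$ for the largest $k$ consistent with positivity, and the decreasing structure of the weights $p_s^i$ guarantees that the active set is always a prefix. This is the water-filling truncation deferred to \Cref{app:powerlaw-allocation}.

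A second, smaller gap is integrality. We would argue that rounding is handled by the discrete analogue of the same argument: $F$ has diminishing marginal gains $p_s^i q^{w}p_t$, so greedily assigning each unit of budget to the position with the largest marginal gain is optimal. The resulting integer allocation differs from the continuous one by at most one unit per position, so the linear-decrease shape is preserved up to rounding.
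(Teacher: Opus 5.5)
Your proposal is correct and follows the paper's proof: both drop the positive constant factor, equalize the marginals $p_s^i(1-p_t)^{w_i}$ via a Lagrange multiplier, and take logarithms to obtain the linear schedule with slope $|\ln p_s|/|\ln(1-p_t)|$. Beyond what the paper writes, your strict-concavity argument gives sufficiency and uniqueness, and your explicit KKT water-filling shows the active set is a prefix; the integrality remark, which the paper omits, is also sound.
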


\noindent The proof follows from Lagrange multipliers; see \Cref{app:allocation-proof}.
In practice, we approximate the linear schedule with a $1/i$ harmonic rule (\Cref{alg:build-spine}) that preserves the monotone-decreasing shape without estimating acceptance rates. Empirically, $\tau$ is robust to the $r$/$\rho$ split within this family (\Cref{fig:heterogeneity-decomp}b); a measured-acceptance allocator improves on the harmonic rule by a further ${\sim}6\%$ (\Cref{sec:allocator}).

\paragraph{Remark (i.i.d.\ as a modeling approximation).}
The i.i.d.\ acceptance model in Propositions~\ref{prop:yield}--\ref{prop:dominance} is a \emph{modeling approximation}, and it enters only the theoretical analysis, not the measurements: the headline $12$--$33\%$ anisotropic-vs-isotropic gain (\Cref{tab:topology}) is a direct equal-budget measurement that assumes no acceptance law.
Positive intra-chain correlation, if anything, makes the modeled allocation conservative; empirical gains in \Cref{tab:topology} confirm the model captures the dominant qualitative structure (\Cref{app:eq5-verify}).
Empirically, acceptance decay over depth is fit better by a geometric law ($R^2{=}0.77$ vs.\ $0.69$; Qwen3-8B traces, ${\approx}0.5$M tree nodes per depth), whereas the all-reject frequency as a function of branch width is fit better by a power law (Llama-3 traces, \Cref{app:r_k-empirical}; with a selection-effect caveat).
In \Cref{app:powerlaw-allocation} we re-derive the optimal allocation under that empirically-fit tail ($\widehat{\alpha}{=}0.261$) and show the optimum keeps the same root-concentrated anisotropic shape; our deployed harmonic schedule is more conservative than either optimum at the root while retaining residual depth-side breadth.

\begin{proposition}[Spine Tree Dominance]
\label{prop:dominance}
For acceptance rates $p_s > p_t > 0$ and any budget $B$ large enough that the spine contains at least two nodes ($B \geq m+2$, practically $B \geq 10$ for $p_s/p_t \geq 8$), the spine tree with optimally allocated branches (Eq.~\ref{eq:alloc}) achieves strictly higher expected yield than the best single-source isotropic tree of the same budget:
\begin{equation}
\mathbb{E}[\tau_{\textup{spine}}^*] \;>\; \mathbb{E}[\tau_{\textup{iso}}^*] \quad \text{whenever } p_s > p_t.
\end{equation}
The gap increases monotonically with $ p_s/p_t$ and vanishes as $p_s \to p_t$.
\end{proposition}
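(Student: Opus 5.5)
The plan is an exchange (slot-upgrade) argument: start from the best single-source isotropic tree and show it can be turned, node for node, into a spine tree whose expected yield under the IRV decomposition is strictly larger.

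\textbf{Setup.} Fix the budget $B$ and let $T_{\mathrm{iso}}$ be an optimal single-source isotropic tree. In the model, every node of $T_{\mathrm{iso}}$ carries one common acceptance rate. If that source is transition tokens, the rate is $p_t$. If it is the context source alone, the tree is a pure chain, which is already the special case $m{=}B$, $w_i{=}0$ of the spine family; this case is handled at the end. By the IRV decomposition (the same one used for \Cref{prop:yield}),
\[
\mathbb{E}[\tau_{\mathrm{iso}}]=1+\sum_{v\in T_{\mathrm{iso}}}\Pr[\text{all ancestors of } v \text{ and } v \text{ accepted}]=1+\sum_v p_t^{d(v)}\,\pi(v),
\]
where $\pi(v)$ collects the sibling-rank factors.

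\textbf{Step 1 (transition-only trees).} In $T_{\mathrm{iso}}$ pick a root-to-leaf path $P$ of depth at least $2$; one exists because $B\ge m+2$. Relabel the nodes of $P$ as context-matched (spine) tokens, and re-hang every off-path subtree as a branch at the corresponding spine node. The node count is unchanged. Each node $v$ on $P$ has its reach probability raised from $p_t^{d(v)}$ to $p_s^{d(v)}$. Each off-path node below a spine node at depth $i$ has its ancestor factor raised from $p_t^i$ to $p_s^i$ while keeping its own class rate $p_t$, consistent with the remark that branches inherit nothing from the spine. Every IRV term therefore weakly increases, and the term for the depth-$1$ spine node increases strictly, since $p_s>p_t$. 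Hence some spine tree beats $T_{\mathrm{iso}}$. By \Cref{prop:allocation}, the optimally allocated spine tree is at least as good as this relabelled one, which gives $\mathbb{E}[\tau^*_{\mathrm{spine}}]>\mathbb{E}[\tau^*_{\mathrm{iso}}]$ against transition-only trees.

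\textbf{Step 2 (the pure PLD chain).} Compare the chain of length $B$ with a spine of length $B-1$ plus one branch at the root. By \Cref{eq:yield}, switching the last spine slot into a root branch loses $p_s^B$ and gains at least $(1-p_s)\,p_t\,(1+\bar\ell)$. I expect this to be the main obstacle: the swap helps only when
\[
(1-p_s)\,p_t\,(1+\bar\ell) > p_s^{B}.
\]
This holds once $B$ is large relative to $\ln(p_t)/\ln(p_s)$, which is exactly where the hypothesis ``practically $B\ge 10$ for $p_s/p_t\ge 8$'' enters. I would make this condition explicit and absorb it into the budget assumption, rather than claim that it holds for every $B$.

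\textbf{Step 3 (monotonicity in $p_s/p_t$).} Write the gap as
\[
G(p_s)=\max_{m,w}Y(p_s,p_t;m,w)-\mathbb{E}[\tau^*_{\mathrm{iso}}](p_t).
\]
For fixed $p_t$, the baseline $\mathbb{E}[\tau^*_{\mathrm{iso}}](p_t)$ does not depend on $p_s$. Every term of $Y$ in \Cref{eq:yield} is nondecreasing in $p_s$ once the spine and continuation terms are combined: $p_s^i+p_s^{i-1}(1-p_s)\phi(1+\bar\ell)$ is increasing because $\phi(1+\bar\ell)\le 1+\bar\ell$, and the full-tree version is checked the same way. A maximum of nondecreasing functions is nondecreasing, so $G$ is monotone in $p_s/p_t$. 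As $p_s\to p_t$, the spine tree becomes a single-rate tree, which the isotropic optimum dominates by its definition (Sequoia-style optimality), so the gap tends to $0$.
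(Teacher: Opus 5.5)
Your Step~1 is the paper's argument. You upgrade one node per depth along a root-to-leaf path of the single-rate ($p_t$) optimum, get a strict gain, and then invoke \Cref{prop:allocation}. The step that fails is Step~3.

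\textbf{The gap in Step~3.} Write $c=\phi_0(1+\bar\ell)$. Your paired term at $i=1$ is $p_s+(1-p_s)c$, whose derivative is $1-c$. The bound $\phi(1+\bar\ell)\le 1+\bar\ell$ does not give $c\le 1$. Concretely, take $p_t=0.3$, $D=6$, $m=1$, $w_0=20$. Then $c\approx 1.43$, and the value $1+p_s+(1-p_s)c$ given by Eq.~\ref{eq:yield} strictly \emph{decreases} in $p_s$.

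The comparison you actually need is $\phi_{i-1}(1+\bar\ell)\le 1+V_i$, where $V_i$ is the expected yield beyond spine node~$i$ (so $V_m=0$). This fails at the last spine node whenever $\phi_{m-1}(1+\bar\ell)>1$. So fixed-shape values of Eq.~\ref{eq:yield} are not all nondecreasing in $p_s$, and ``a maximum of nondecreasing functions'' does not go through.

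The source is the entry factor $(1-p_s)$: a more reliable spine token means its sibling branches are entered less often. Your Step~1 never meets this, because the product accounting of your Setup leaves sibling terms untouched. Steps~2--3, however, use Eq.~\ref{eq:yield}, which is also what \Cref{prop:allocation} optimizes. There a branch at spine node~$i$ carries weight $p_s^i(1-p_s)$. Upgrading the node it competes with lowers that weight (at the root, from $1-p_t$ to $1-p_s$). So ``every IRV term weakly increases'' is not available in the accounting your later steps rely on.

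\textbf{How the paper avoids it.} Its replacement lemma works with the per-depth advance probability $q_d=1-(1-p_t)^{w_d}$. Upgrading one candidate gives $q_d'=q_d+(p_s-p_t)(1-p_t)^{w_d-1}$, which nets the spine token's gain against its siblings' loss. The yield $\sum_d\prod_{j\le d}q_j+1$ is increasing in each $q_j$. Since $p_s$ enters only through the $q_d'$, which are increasing and linear in $p_s-p_t$, every fixed upgraded shape has yield nondecreasing in $p_s$. At $p_s=p_t$ that yield equals the baseline. This is the content of the paper's Step~4. You should redo your Step~3, and preferably the strictness claim in Step~1, in this aggregated quantity.

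\textbf{On your Step~2.} It goes beyond what the paper proves, since the paper's competitor is explicitly the single-rate $p_t$ optimum. Your caveat about the pure chain is nonetheless correct. At $p_s=0.8$, $p_t=0.1$, $B=10$ the chain is optimal under Eq.~\ref{eq:yield}. Each removed spine slot costs at least $p_s^{10}\approx 0.107$, while each added branch gains at most $(1-p_s)p_t(1+\bar\ell)\approx 0.022$. The same example shows that the ratio rule ``$B\ge 10$ for $p_s/p_t\ge 8$'' cannot be what secures your condition. That condition depends on $p_s$ itself, not only on $p_s/p_t$.
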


\noindent\textit{Proof sketch.}
Start from the optimal isotropic tree of budget~$B$ (the single-rate optimum of Sequoia-style DP~\citep{chen2024sequoia} when every token carries rate~$p_t$).
Upgrade one token per depth from~$p_t$ to~$p_s > p_t$, creating a spine.
At each modified depth~$d$, the probability of at least one accepted candidate increases by $(p_s{-}p_t)(1{-}p_t)^{w_d-1} > 0$, a strictly positive improvement that propagates through the yield.
Optimizing the allocation (\Cref{prop:allocation}) adds further gains.
When $p_s \to p_t$, the upgrade vanishes, recovering the isotropic baseline.
Full proof in \Cref{app:dominance-proof}. \hfill$\square$

% ═══════════════════════════════════════════════════════════════════════
% METHOD
% ═══════════════════════════════════════════════════════════════════════
\section{Method: \method{}}

\begin{figure*}[!t]
\centering
\includegraphics[width=0.97\textwidth]{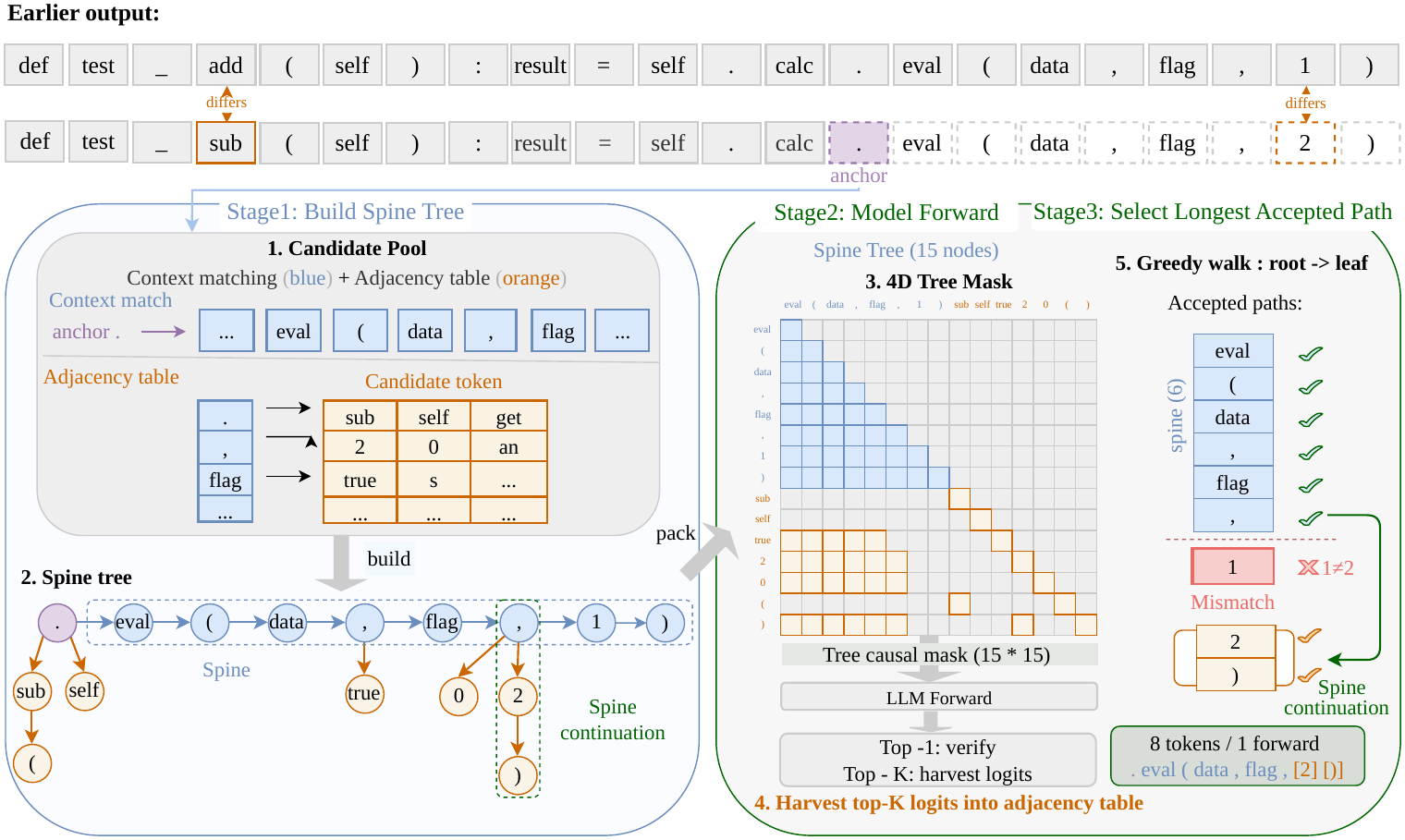}
\vspace{-4pt}
\caption{\textbf{\method{} pipeline overview.}
From anchor~\texttt{.}, Stage~1 draws from a unified
\emph{candidate pool}: context matching produces the spine
(\textcolor{pldblue}{blue}), the adjacency table supplies branches
(\textcolor{trorange}{orange}).
Stage~2 verifies all candidates via one LLM forward pass.
Stage~3 selects the longest accepted path
via a greedy walk, discovering \textbf{spine
continuation}: ``\texttt{2}''$\,\to\,$``\texttt{)}'' extends beyond the spine
mismatch, yielding 8~tokens per call.}
\label{fig:pipeline}
\vspace{-16pt}
\end{figure*}

\label{sec:method}

% ═══════════════════════════════════════════════════════════════════════
% Figure: Pipeline Overview (full-width) + Tree Topology Comparison
% ═══════════════════════════════════════════════════════════════════════

\method{} follows the standard speculative-decoding loop~\citep{cai2024medusa}---generate candidates, verify them in one forward pass, accept the longest correct path---but reshapes the candidate tree into a single asymmetric structure that combines two complementary token sources (\Cref{fig:overview}d).
Each decoding cycle begins from an \emph{anchor token}, the last token accepted in the previous cycle, and proceeds in three stages (\Cref{fig:pipeline}).
\textit{Stage~1---Build spine tree.}
\method{} assembles a \emph{candidate pool} from two sources.
Context matching (\pld{}) retrieves a continuation from previously generated text, forming a deep chain called the \emph{spine}.
The \emph{adjacency table}, our bigram extension of \tr{}'s adjacency matrix (\Cref{sec:tree}; implementation in \Cref{app:impl}), supplies statistically likely alternatives that fork from spine nodes as \emph{branches}.
\textsc{BuildSpineTree} allocates the node budget~$B$ between spine and branches (\Cref{sec:tree}), then packs all candidates into one sequence with a \emph{tree attention mask}: a modified causal mask that lets each node attend only to its root-to-node ancestors, so all paths are verified in one forward pass.
\textit{Stage~2---Verify and harvest.}
One LLM forward pass scores every candidate.
All logits---including those at rejected branches---are harvested into the adjacency table, and the accepted prefix updates the \pld{} context index, enriching both sources for future cycles.
\textit{Stage~3---Select longest correct path.}
A greedy walk identifies the longest path verified as correct by the target model (\Cref{sec:continuation}); the last accepted token becomes the anchor of the next cycle.

The tree shape is not fixed.
When context-match confidence is high, \method{} skips tree construction and verifies the \pld{} chain linearly (\emph{bypass mode}; \Cref{sec:consensus}); when no context match exists, the spine vanishes and the full budget forms a \tr{}-only tree.
The asymmetric topology enables a synergy we call \emph{spine continuation}: when a mismatch occurs on the spine, a branch at that point can extend the accepted path, recovering tokens that neither source alone would have found.

Two refinements improve candidate quality (ablated in \Cref{tab:ablation}): agreement among $n$-gram lengths on the first token is treated as high confidence and extends the spine aggressively, and the adjacency table uses bigram rather than unigram conditioning to sharpen branch predictions.
The complete decode loop is in \Cref{alg:goose} (Appendix).

\subsection{Spine Tree Construction}
\label{sec:tree}

\paragraph{Tree construction.}
Given the anchor token and node budget~$B$, one parameter controls the budget split: the \textit{spine ratio}~$r$, which sets what fraction of~$B$ goes to spine nodes (adapted online; \Cref{sec:consensus}).
Construction proceeds in four steps (\Cref{alg:build-spine}, Appendix):
(1)~lay the spine---allocate up to $B {\cdot} r$ nodes as a linear chain of context-matched tokens;
(2)~add root branches---attach the top adjacency-table alternatives at the root, using a share $(1{-}\rho)$ of the remaining budget (spine-branch ratio $\rho{=}0.5$ by default);
(3)~add spine branches---distribute the remaining share~$\rho$ across spine nodes, with wider branches near the root and narrower ones deeper, approximating the optimal allocation derived in \Cref{prop:allocation};
(4)~extend all branches recursively via the adjacency table up to a maximum depth of~6.
As introduced above, the adjacency table conditions on the two most recent tokens rather than \tr{}'s original single-token lookup~\citep{luo2025tokenrecycling}, improving prediction accuracy.

% ═══════════════════════════════════════════════════════════════════════
% Figure 1: Tree Topology Comparison (single-column, 2×2)
% (Tree comparison figure moved into combined figure* above)

\subsection{Unified Verification and Non-Degradation}
\label{sec:continuation}

The spine tree implicitly contains all candidate paths; verification requires only a single greedy walk.

\paragraph{Greedy walk with source priority.}
Starting at the root, the walk advances to a child whose draft token equals the target model's greedy argmax at the current node---pure token-identity matching, with \emph{no per-token score gate}: draft-side scores (\tr{} adjacency counts, \pld{} match length) are used only at construction time and are never consulted during verification, and no acceptance probability is carried from parent to child.
At each node the walk checks the \pld{} (spine) child before the \tr{} (branch) children; duplicates are excluded, so at most one child can match and the priority never changes the accepted path; once the walk leaves the spine for a branch it does not return (\Cref{app:verify}).
The walk terminates when no child matches, and the accepted path falls into one of three categories:
(a)~\textit{Pure \pld{}} (all spine nodes, equivalent to standalone context-match),
(b)~\textit{Spine continuation} (spine followed by a \tr{} branch at the mismatch point---the synergy unique to the spine tree), or
(c)~\textit{Pure \tr{}} (root branch only, equivalent to standalone transition).
No multi-path bookkeeping is needed; the greedy walk automatically discovers the longest accepted path.
At termination, the target model samples one additional \emph{bonus token} from its distribution at the last accepted position, following the standard speculative-decoding guarantee~\citep{leviathan2023fast}.

\begin{proposition}[Non-Degradation Guarantee]
\label{prop:monotonicity}
For any node budget~$B$, the spine tree's \emph{expected yield}---the expected number of tokens accepted per cycle---is at least as large as that of standalone context matching or standalone transition-tree decoding, each given the full budget~$B$.
\end{proposition}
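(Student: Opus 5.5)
The plan is to prove non-degradation by a containment-and-coupling argument rather than by comparing closed-form yields. The key fact is that the spine-tree family indexed by $(m,\{w_i\})$ contains, as degenerate members, exactly the trees that standalone context matching and standalone transition decoding would build with budget~$B$. Combined with the greedy walk of \Cref{sec:continuation}, which always finds the longest accepted path in whatever tree is verified, this reduces the claim to two pathwise inequalities: adding nodes to a tree can never shorten its longest accepted path.

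\textbf{Context-matching side.} I would take the configuration $m=\min(B,L)$ with all $w_i=0$, where $L$ is the available match length. This is precisely the \pld{} chain, so its yield equals standalone \pld{}.

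For any other admissible configuration whose spine is a prefix of that chain, I would couple the two trees on the same target-model argmax sequence. Because verification is pure token-identity matching with no score gate, each spine node is accepted in both trees under exactly the same event. The walk in the spine tree checks the spine child first, so it reproduces the \pld{} accepted path up to the first mismatch. At that point it can only continue further, either into a branch (continuation) or by stopping. Hence $\tau_{\mathrm{spine}}\ge\tau_{\mathrm{PLD}}$ pathwise, up to the spine-truncation issue below, and taking expectations gives the first half.

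\textbf{Transition side.} The argument is symmetric with $m=0$, or with $m=1$ as in the text: all budget goes to root branches plus recursive adjacency-table extensions. This is the \tr{}-only tree, and the method already falls back to it when no context match exists.

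When a spine is present, the accepted path of the \tr{} tree is still a root-to-node path of the spine tree, provided the \tr{} tree's root candidates are kept among the root branches. Since the walk finds the longest accepted path and duplicates are excluded, $\tau_{\mathrm{spine}}\ge\tau_{\mathrm{TR}}$ on that event.

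\textbf{Main obstacle.} A fixed spine tree with budget~$B$ does not literally contain both full-budget standalone trees at once. Spine nodes displace some \tr{} nodes, and vice versa. So the guarantee must be stated for the \emph{best} (or adaptively chosen) configuration, $\max_{m,\{w_i\}}\mathbb{E}[\tau]\ge\max(\mathbb{E}[\tau_{\mathrm{PLD}}],\mathbb{E}[\tau_{\mathrm{TR}}])$, which follows immediately from containment.

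To connect this to the deployed method, I would invoke the runtime adaptations of \Cref{sec:consensus}: bypass mode reduces to the pure chain when context-match confidence is high, and an absent match reduces to the pure \tr{} tree. In the intermediate regime, I would argue in expectation through \Cref{prop:yield}. Its lower bound at the extremes $w_i=0$ and $m=1$ recovers the two standalone yields, and \Cref{prop:dominance} shows that interior allocations only increase the yield when $p_s>p_t$. The proof should be explicit that this last step holds under the i.i.d.\ heterogeneous model rather than pathwise.
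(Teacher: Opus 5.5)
Your proposal is essentially the paper's argument: the \pld{} half by pathwise preservation of the context-matched chain (with bypass covering long matches), and the \tr{} half by exact recovery when no match exists plus an appeal to \Cref{prop:dominance} under the i.i.d.\ model; your observation that no single budget-$B$ tree contains both full-budget standalone trees is exactly what the paper's Remark concedes. The one ingredient you leave implicit is the minimum spine ratio: with $B=60$ and $r\ge 0.15$ there are $\lfloor Br\rfloor = 9$ spine slots while every non-bypassed match is shorter than the 8-token bypass threshold, so the whole chain always fits, your coupling already proves the \pld{} half pathwise for the deployed tree (not only for the best configuration), and only the \tr{} half depends on the model.
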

\noindent\textit{Proof sketch.}
(i)~Standalone \pld{} yields a chain bounded by the match length, not~$B$; \method{} preserves those tokens in full (short matches fit within $\lfloor B r \rfloor$; long matches trigger bypass, \Cref{sec:consensus}) and fills the unused budget with branches, so its accepted path is at least as long.
(ii)~Versus a standalone transition tree, the spine swaps breadth nodes for higher-acceptance context-matched depth, which can only maintain or increase expected path length (\Cref{prop:dominance}); with no context match, the spine vanishes and the transition tree is recovered exactly. \hfill$\square$

\noindent\textit{Remark.}
The guarantee holds for expected yield, not directly wall-clock, and parts~(i)/(ii) rely on the runtime adaptations of \Cref{sec:consensus} (minimum spine ratio, bypass, graceful fallback) for the boundary cases a fixed shape cannot cover; \Cref{tab:ablation} quantifies their contributions.

The greedy walk is formalized in \Cref{alg:verify}; the full decoding loop, including tree construction and logit harvesting, is in \Cref{alg:goose} (Appendix).

\subsection{Confidence-Adaptive Topology}
\label{sec:consensus}

Not every cycle benefits equally from tree construction.
\method{} uses a continuous \textit{PLD confidence} signal to modulate tree shape, ranging from a full spine tree down to a simple linear chain.

\paragraph{Confidence signal.}
\method{} queries context matching with multiple $n$-gram lengths $\{3,4,5\}$.
Confidence is a function of two indicators: (i)~\textit{consensus}---at least two $n$-gram lengths return continuations that agree on the first token, and (ii)~\textit{chain length}---long matches (8 or more tokens) indicate high PLD acceptance.
When confidence is high, tree construction is skipped and the spine is verified as a single linear sequence (\emph{bypass} mode), preserving the guarantee of \Cref{prop:monotonicity}.
The spine ratio~$r$ is adjusted each cycle by an EMA of the PLD acceptance rate: high averages deepen the spine ($r \to 0.50$), low ones shift budget to branches ($r \to 0.15$), converging within 3--5 cycles (details in \Cref{app:hyperparams}).

% ═══════════════════════════════════════════════════════════════════════
% EXPERIMENTS
% ═══════════════════════════════════════════════════════════════════════
\section{Experiments}
\label{sec:experiments}

\subsection{Setup}

\paragraph{Models, hardware, and datasets.}
We evaluate five instruction-tuned models spanning 7B--33B parameters: Vicuna-7B/13B/33B~\citep{zheng2024judging}, Llama-3-8B-Instruct~\citep{meta2024llama3}, and Qwen3-8B~\citep{qwen2025qwen3}, all in FP16. The 7B--13B models run on a single NVIDIA A40 (48\,GB); Vicuna-33B runs on 2$\times$A100-40GB.
We evaluate on five benchmarks spanning code, math, and dialogue:
HumanEval~\citep{chen2021evaluating}, MBPP~\citep{austin2021program}, ClassEval~\citep{du2023classeval}, GSM8K~\citep{cobbe2021training}, and MT-Bench~\citep{zheng2024judging}---ranging from high repetition (code templates) to low repetition (open dialogue).
Dataset sizes and generation lengths are listed in \Cref{tab:datasets}.
Batch size~1; seed~0.

\paragraph{Baselines and metrics.}
AR (standard autoregressive), standalone \pld{}~\citep{saxena2023prompt}, standalone \tr{}~\citep{luo2025tokenrecycling} (using its default imbalanced BFS tree with unigram adjacency), Lookahead Decoding~\citep{fu2024break}, REST~\citep{he2024rest}, and EAGLE-2~\citep{li2024eagle2} (draft-head method; Vicuna-7B, Llama-3-8B, Vicuna-13B, and Vicuna-33B using official released heads).
For the topology comparison (\Cref{sec:topology}), we evaluate \textbf{Isotropic}($k$): a balanced $k$-ary tree with $k \in \{3, 5\}$ and the same node budget, using \method{}'s draft source.
This baseline is a \emph{topology control} (uniform shape at equal budget), not a reproduction of Sequoia, whose DP builds \emph{non-uniform} trees~\citep{chen2024sequoia}; we compare against its official implementation in \Cref{sec:sequoia-headtohead}.
All methods use greedy decoding and produce identical output (lossless).
We additionally compare with EAGLE-3~\citep{li2025eagle3} in \Cref{sec:eagle3}; its drafter is a single decoder layer plus an FC fusion layer, comparable in scale to EAGLE-2's head.
We report $\taumetric$ (compression ratio: mean tokens accepted per verification call; cf.\ \Cref{sec:theory}), tok/s (wall-clock throughput), and Speedup (tok/s vs.\ AR); all hyperparameters are fixed across datasets and models (node budget $B{=}60$, max branch depth $D{=}6$, top-$K{=}10$ adjacency candidates, spine-branch ratio $\rho{=}0.5$; full list in \Cref{app:hyperparams}).
During the first few cycles, the adjacency table is empty; \method{} gracefully degrades to pure \pld{} (or AR if no context match exists) and populates the table from the resulting logits.

\subsection{Main Results}
\label{sec:main-results}

\begin{figure*}[!t]
\centering
\includegraphics[width=\textwidth]{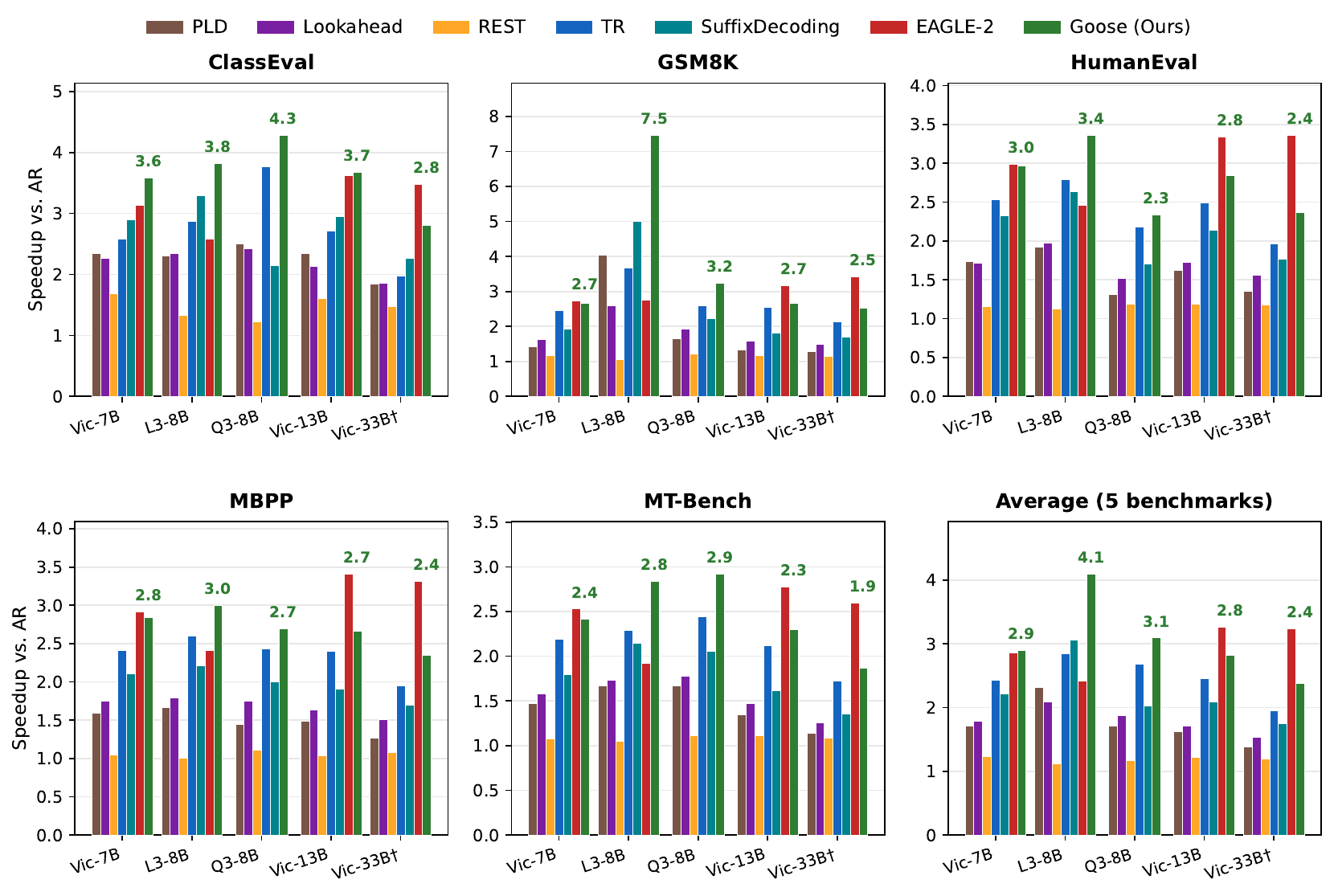}
\caption{Wall-clock speedup over autoregressive (AR) decoding across five models and five benchmarks. All methods are lossless (greedy decoding, identical output); \method{} values are annotated above each group and the sixth panel is the per-model mean. SuffixDecoding is the strongest training-free system reported in prior work, EAGLE-2 the strongest trained one shown (absent on Qwen3-8B, which has no released head). Per-benchmark $\tau$ and Sequoia: \Cref{tab:main-full}; EAGLE-3: \Cref{app:eagle3-raw}. $^\dagger$Vicuna-33B runs on 2$\times$A100-40GB; all others on a single A40.}
\label{fig:main-speedup}
\vspace{-20pt}
\end{figure*}

\Cref{fig:main-speedup} summarizes the main results.
\method{} achieves 1.9--4.3$\times$ lossless speedup across all five models, consistently outperforming every training-free baseline.
Speedup varies predictably with acceptance heterogeneity:
ClassEval (repetitive templates) yields up to 4.3$\times$; MT-Bench (open dialogue) still reaches 1.9--2.9$\times$, confirming that \tr{} branches alone contribute meaningful acceleration.
One outlier: on Llama-3-8B GSM8K, near-verbatim repetition inflates speedup to~7.5$\times$ ($\tau{=}9.25$); all other 24 settings fall within the 1.9--4.3$\times$ speedup range.

\paragraph{Comparison with EAGLE-2.}\looseness=-1
EAGLE-2's trained draft head yields higher $\tau$ (\Cref{tab:main-full}), but its per-step latency offsets the advantage: \method{} outperforms EAGLE-2 on Llama-3-8B (4.1$\times$ vs.\ 2.4$\times$), matches on Vicuna-7B (2.9$\times$ vs.\ 2.9$\times$), while EAGLE-2 leads on Vicuna-13B (2.8$\times$ vs.\ 3.3$\times$) and Vicuna-33B (2.4$\times$ vs.\ 3.2$\times$).
Two factors explain this pattern.
\emph{Model size sets the wall-clock winner}: at ${\leq}8$B the base forward is cheap, so zero drafter cost outweighs EAGLE's higher $\tau$; at ${\geq}13$B the slower base forward lets that higher $\tau$ convert and win.
\emph{Task repetition sets the margin}: the spine fires most on repetitive workloads and least on open-ended dialogue.
We read this as an empirical trend on these models, not a strict size law.
As a training-free method, \method{} also covers models for which no trained draft head exists (Qwen3-8B).

\paragraph{Comparison with stronger baselines.}\looseness=-1
\label{sec:review-additions}\label{sec:sequoia-headtohead}\label{sec:suffix}\label{sec:eagle3}
Three further systems bracket the strongest alternatives; SuffixDecoding and Sequoia also appear in \Cref{tab:main-full}.
SuffixDecoding~\citep{oliaro2024suffix}, the strongest training-free drafter reported in prior work, drafts from a suffix tree over the prompt and prior outputs---richer statistics than \method{}'s bigram adjacency table.
At full scale (6 models $\times$ 13 datasets), \method{} leads $\tau$ in all 78 cells and wall-clock throughput in all 52 cells with full accounting (geometric mean $+35$--$36\%$); per-cycle instrumentation attributes the gap mainly to depth allocation---\method{} speculates 52--60 nodes per non-dry cycle at depths $5.7$--$22$, against SuffixDecoding's 3--8 at $2.9$--$7.3$---with coverage (twice as many dry cycles, whose draft is empty or a single token) and cold start second, and merging its two trees into one does not close it (\Cref{app:suffix-mech}).
Conversely, swapping our \pld{} spine for that suffix tree raises $\tau$ in all 15 cells tested ($+3$--$47\%$, \Cref{app:cross-source}), so the topology is agnostic to which source feeds the spine.
Sequoia~\citep{chen2024sequoia} optimizes tree shape under a single acceptance vector: we reproduce its published $\tau$ within $\pm 6\%$ on its released Llama-2-7B stack, where its trained 68M drafter wins $\tau$ on all three of its datasets yet the two systems trade wall-clock (\method{} $+8\%$ on C4, Sequoia $+5\%$ on CNN/DM, OpenWebText within noise). Its per-cycle drafter forward decides the outcome and costs more at modern scale: there Sequoia reaches $\tau$ of $6.2$--$9.2$ yet never exceeds $1.60\times$ wall-clock, and on Qwen3-8B it drops below AR outright (\Cref{app:sequoia-modern}).
EAGLE-3~\citep{li2025eagle3} is the strongest trained drafter.
Its head is trained on chat-template data, so the chat-template protocol is its fair setting; there it wins 4 of 5 datasets ($+53$--$79\%$ tok/s, with \method{} retaining ClassEval at $+39\%$).
Under our raw-prompt protocol the result inverts and \method{} leads all five ($+25$--$83\%$ on the four non-truncated datasets; \Cref{app:eagle3-raw}).
We do not claim to beat EAGLE-3 in its native setting; the two are cross-category (\Cref{sec:training-free-value}).

\subsection{Anisotropic vs.\ Isotropic Topology and Ablation}
\label{sec:topology}

\begin{table*}[!t]
\centering
% --- Caption row (top-aligned; shorter caption padded by taller one) ---
\begin{minipage}[t]{0.47\textwidth}
\vspace{0pt}\centering\small
\caption{Compression ratio~$\tau$ for spine vs.\ isotropic trees (budget $B{=}60$). $\Delta_3$: relative gain over Iso(3).}
\label{tab:topology}
\end{minipage}%
\hfill
\begin{minipage}[t]{0.51\textwidth}
\vspace{0pt}\centering\small
\caption{Ablation on Qwen3-8B ($\tau$). Each row removes one design choice; $\bar\Delta$: macro-averaged relative change.}
\label{tab:ablation}
\end{minipage}
\vspace{-2pt}
% --- Body row (both table bodies start at the same height) ---
\noindent
\begin{minipage}[t]{0.47\textwidth}
\centering\small
\resizebox{\linewidth}{!}{
\begin{tabular}{@{}ll rrr r@{}}
\toprule
\textbf{Model} & \textbf{Dataset} & \textbf{Spine (Ours)} & \textbf{Iso(3)} & \textbf{Iso(5)} & $\boldsymbol{\Delta_3}$ \\
\midrule
\multirow{3}{*}{L3-8B}
& ClassEval  & \textbf{4.89} & 3.67 & 3.60 & +33\% \\
& HumanEval  & \textbf{4.11} & 3.32 & 3.27 & +24\% \\
& MT-Bench   & \textbf{3.50} & 2.76 & 2.76 & +27\% \\
\midrule
\multirow{5}{*}{Q3-8B}
& ClassEval  & \textbf{5.19} & 3.97 & 3.92 & +31\% \\
& GSM8K      & \textbf{3.87} & 3.00 & 3.01 & +29\% \\
& HumanEval  & \textbf{2.78} & 2.49 & 2.51 & +12\% \\
& MBPP       & \textbf{3.20} & 2.73 & 2.77 & +17\% \\
& MT-Bench   & \textbf{3.51} & 2.81 & 2.81 & +25\% \\
\midrule
\multicolumn{2}{@{}l}{Average} & \textbf{3.88} & 3.09 & 3.08 & +25.4\% \\
\bottomrule
\end{tabular}
}
\end{minipage}%
\hfill
\begin{minipage}[t]{0.51\textwidth}
\centering\small
\resizebox{\linewidth}{!}{
\begin{tabular}{@{}l rrrrr r@{}}
\toprule
\textbf{Config.} & \textbf{HE} & \textbf{GS} & \textbf{MB} & \textbf{CE} & \textbf{MT} & $\boldsymbol{\bar\Delta}$ \\
\midrule
\method{} (Ours)              & \textbf{2.78} & \textbf{3.87} & \textbf{3.20} & \textbf{5.19} & \textbf{3.51} & --- \\
\midrule
\multicolumn{7}{@{}l}{\textit{Idea 1: Anisotropic topology (\S\ref{sec:tree})}} \\
\quad w/o spine branches      & 2.71 & 3.70 & 3.10 & 5.03 & 3.46 & $-$2.9\% \\
\quad w/o bigram adjacency    & 2.69 & 3.66 & 3.11 & 5.01 & 3.38 & $-$3.7\% \\
\midrule
\multicolumn{7}{@{}l}{\textit{Idea 2: Confidence-adaptive budget (\S\ref{sec:consensus})}} \\
\quad w/o consensus bypass    & 2.77 & 3.83 & 3.05 & 4.60 & 3.22 & $-$5.1\% \\
\quad w/o context-match spine & 2.68 & 3.76 & 3.11 & 5.00 & 3.37 & $-$3.4\% \\
\bottomrule
\end{tabular}
}
\end{minipage}
\vspace{-12pt}
\end{table*}

\Cref{tab:topology} confirms the prediction of \Cref{sec:theory}: the spine tree wins all 8 settings by 12--33\% (we report Llama-3-8B and Qwen3-8B as architecturally distinct representatives; Vicuna variants show similar trends).
Iso($k{=}5$) performs similarly to Iso($k{=}3$), showing that the isotropic regime itself---not $k$---is the bottleneck.

\paragraph{Ablation analysis.}
\Cref{tab:ablation} isolates each design idea by removing one component at a time on Qwen3-8B.
\textbf{Idea~1 (Anisotropic topology).}~Removing spine branches ($-2.9\%$) or bigram adjacency ($-3.7\%$) each degrades $\tau$, confirming that the advantage requires both branch placement and transition quality.
\textbf{Idea~2 (Confidence-adaptive budget).}~Consensus bypass is the largest contributor ($-5.1\%$, up to $-11.4\%$ on ClassEval): when all $n$-gram lengths agree on a long continuation, \method{} extends the spine directly without tree construction.
Removing the spine ($-3.4\%$) shows the \pld{} signal itself is essential; topology and adaptation are complementary.
\textbf{Idea~3 (Cross-source spine continuation).}~Disabling spine continuation---the greedy walk's extension from a broken spine into a \tr{} branch (\Cref{sec:continuation})---while keeping the anisotropic shape costs $-4.4\%$ on average ($-1.7\%$ HumanEval to $-6.7\%$ MT-Bench), measured as a matched pair (both arms re-run on identical code and configuration).
The tree-\emph{shape} gain itself is measured separately and assumption-free in \Cref{tab:topology} ($+12$--$33\%$ at equal budget over the isotropic control).
To complement the ablation, we note that moving from \tr{}-only isotropic to full \method{} on Qwen3-8B (MT-Bench) raises $\tau$ from 2.81 to 3.51 ($+$25\%); per-component contributions are the leave-one-out deltas of \Cref{tab:ablation}.

\subsection{Analysis}
\label{sec:analysis-main}

\paragraph{Source decomposition and synergy.}
\Cref{fig:heterogeneity-decomp}a decomposes each model's mean~$\tau$ into the best standalone baseline $\max(\tau_{\pld{}}, \tau_{\tr{}})$ and the synergy gain from the spine-tree topology.
The synergy ratio ($\tau_{\method{}} / \max(\tau_{\pld}, \tau_{\tr})$) exceeds 1.0 on all 25 settings (average 1.24$\times$; per-model breakdown in \Cref{app:synergy}).
The source split varies by task; even on the weakest datasets, every A40 cell stays above $2.6\times$ $\tau$ (\Cref{app:robustness}).
A per-accept source attribution across four models confirms the sources are complementary: \pld{}'s share spans 33--85\% by task, and where it is sparsest \tr{} supplies the majority without $\tau$ collapsing (\Cref{app:source-attribution}).
Tree construction adds negligible CPU overhead ($<$1\% of wall-clock time).
\method{} has no neural drafter: the draft comes from table lookup, and the only model call per cycle is the verification forward.
This makes the conversion from $\tau$ to wall-clock tight.
On Llama-3-8B HumanEval, $\tau{=}4.11$ yields a measured $3.36\times$ speedup, a ${\approx}82\%$ conversion; the shortfall is tree construction plus the slightly larger, weight-bound verification forward.

% ═══════════════════════════════════════════════════════════════════════
% DISCUSSION AND CONCLUSION
% ═══════════════════════════════════════════════════════════════════════
\section{Discussion and Conclusion}
\label{sec:discussion}

We introduced the spine tree, an anisotropic draft-tree topology that exploits acceptance heterogeneity among training-free draft sources.
Without any training, \method{} achieves 1.9--4.3$\times$ lossless speedup, outperforms isotropic baselines by 12--33\% (\Cref{tab:topology}), and matches or exceeds EAGLE-2 wall-clock speed on two of the four models with published heads.

\paragraph{Where training-free speculation is valuable.}
\label{sec:training-free-value}\looseness=-1
Trained-drafter and training-free methods are cross-category.
A trained head buys raw $\tau$ where its training distribution matches; a training-free drafter buys zero-cost, day-one deployment and still wins outright at ${\leq}8$B (Llama-3-8B HumanEval: $3.36\times$ vs.\ EAGLE-2's $2.46\times$).
The trade-off flips at ${\geq}13$B, where the slower base forward lets EAGLE-2's higher $\tau$ convert ($3.24\times$ vs.\ \method{}'s $2.38\times$ on Vicuna-33B; \Cref{tab:main-full}).

\paragraph{Limitations.}\looseness=-1
Our main results use batch size~1, the latency-sensitive, memory-bandwidth-bound regime where speculation is strongest.
A batch-size sweep (\Cref{app:batch}) shows \method{} leading at small batch and batched AR overtaking as batch size grows (crossover ${\approx}$4--16), the general speculation-vs-batching trade-off.
The adjacency table stays small ($<$7\,MB), and our analysis assumes greedy decoding; production-grade batched serving and sampling-based verification~\citep{leviathan2023fast} remain future work.

\newpage
\section*{Acknowledgments}
This work was supported by the Nakajima Foundation.
The authors sincerely thank the chairs and reviewers of COLM 2026 for their thoughtful and insightful feedback; their reviews and the discussion period greatly improved this work.

% ═══════════════════════════════════════════════════════════════════════
\bibliography{colm2026_conference}
\bibliographystyle{colm2026_conference}

% Force all floats to appear before the appendix
\FloatBarrier

% ═══════════════════════════════════════════════════════════════════════
% APPENDIX
% ═══════════════════════════════════════════════════════════════════════
\appendix

\section{Full Main Results}
\label{app:full-main}

\begin{table*}[!ht]
\centering
\caption{Full main results. Speedup (Spd): wall-clock tok/s ratio vs.\ AR measured in the same run. $\tau$: compression ratio (tokens per target-model forward; drafter forwards are not counted, which favors the trained rows). \textbf{Bold}: best per column. All methods produce identical output (lossless). \colorbox{gray!15}{Shaded rows} denote methods requiring per-model training. Sequoia appears on the two targets with released modern drafters, at $N{=}60$ to match our budget $B$; its $\tau$ follows Sequoia's official convention (prefill call excluded), at most $1.7\%$ above the table's convention. EAGLE-2 is absent on Qwen3-8B, which has no released head. $^\dagger$Vicuna-33B evaluated on NVIDIA A100 (2$\times$40GB); all other models on A40. The Vicuna-33B GSM8K EAGLE-2 cell comes from a separate run of that environment, so its Spd is taken against that run's own AR; $\tau$ is unaffected.}
\label{tab:main-full}
\resizebox{\textwidth}{!}{%
\small
\setlength{\tabcolsep}{3pt}
\begin{tabular}{l l rr rr rr rr rr r}
\toprule
& & \multicolumn{2}{c}{\textbf{ClassEval}} & \multicolumn{2}{c}{\textbf{GSM8K}} & \multicolumn{2}{c}{\textbf{HumanEval}} & \multicolumn{2}{c}{\textbf{MBPP}} & \multicolumn{2}{c}{\textbf{MT-Bench}} & \textbf{Avg.} \\
\cmidrule(lr){3-4} \cmidrule(lr){5-6} \cmidrule(lr){7-8} \cmidrule(lr){9-10} \cmidrule(lr){11-12} \cmidrule(lr){13-13}
\textbf{Model} & \textbf{Method} & Spd & $\tau$ & Spd & $\tau$ & Spd & $\tau$ & Spd & $\tau$ & Spd & $\tau$ & Spd \\
\midrule
\multirow{8}{*}{\rotatebox{90}{\small Vicuna-7B}}
& AR                         & 1.00 & 1.00 & 1.00 & 1.00 & 1.00 & 1.00 & 1.00 & 1.00 & 1.00 & 1.00 & 1.00 \\
& \pld{}                     & 2.34 & 2.50 & 1.43 & 1.46 & 1.74 & 1.78 & 1.59 & 1.62 & 1.47 & 1.48 & 1.71 \\
& \tr{}                      & 2.58 & 3.29 & 2.46 & 2.96 & 2.53 & 3.06 & 2.41 & 2.86 & 2.19 & 2.60 & 2.43 \\
& Lookahead                  & 2.27 & 2.41 & 1.63 & 1.67 & 1.72 & 1.77 & 1.75 & 1.78 & 1.58 & 1.60 & 1.79 \\
& REST                       & 1.68 & 1.73 & 1.17 & 1.19 & 1.16 & 1.17 & 1.05 & 1.05 & 1.08 & 1.08 & 1.23 \\
& SuffixDecoding             & 2.90 & 3.34 & 1.92 & 1.98 & 2.33 & 2.44 & 2.11 & 2.16 & 1.80 & 1.85 & 2.21 \\
\rowcolor{gray!15}
& EAGLE-2                    & 3.14 & \textbf{5.50} & \textbf{2.73} & \textbf{4.51} & \textbf{2.99} & \textbf{5.02} & \textbf{2.92} & \textbf{4.83} & \textbf{2.53} & \textbf{4.21} & 2.86 \\
& \method{} (Ours)           & \textbf{3.58} & 4.64 & 2.67 & 3.23 & 2.97 & 3.63 & 2.84 & 3.40 & 2.42 & 2.88 & \textbf{2.90} \\
\midrule
\multirow{9}{*}{\rotatebox{90}{\small Llama-3-8B}}
& AR                         & 1.00 & 1.00 & 1.00 & 1.00 & 1.00 & 1.00 & 1.00 & 1.00 & 1.00 & 1.00 & 1.00 \\
& \pld{}                     & 2.30 & 2.49 & 4.04 & 4.18 & 1.92 & 2.01 & 1.67 & 1.72 & 1.67 & 1.72 & 2.32 \\
& \tr{}                      & 2.87 & 3.64 & 3.68 & 4.52 & 2.79 & 3.40 & 2.60 & 3.11 & 2.29 & 2.82 & 2.85 \\
& Lookahead                  & 2.35 & 2.57 & 2.59 & 2.80 & 1.98 & 2.08 & 1.79 & 1.86 & 1.73 & 1.80 & 2.09 \\
& REST                       & 1.33 & 1.35 & 1.06 & 1.07 & 1.13 & 1.14 & 1.01 & 1.01 & 1.05 & 1.05 & 1.12 \\
& SuffixDecoding             & 3.29 & 3.46 & 5.00 & 5.10 & 2.64 & 2.72 & 2.21 & 2.26 & 2.15 & 2.18 & 3.06 \\
\rowcolor{gray!15}
& Sequoia                    & 1.52 & \textbf{8.63} & 1.56 & 8.89 & 1.53 & \textbf{8.56} & 1.48 & \textbf{8.42} & 1.36 & \textbf{7.57} & 1.49 \\
\rowcolor{gray!15}
& EAGLE-2                    & 2.58 & 5.29 & 2.75 & 5.52 & 2.46 & 4.95 & 2.41 & 4.83 & 1.92 & 3.85 & 2.42 \\
& \method{} (Ours)           & \textbf{3.82} & 4.89 & \textbf{7.46} & \textbf{9.25} & \textbf{3.36} & 4.11 & \textbf{3.00} & 3.61 & \textbf{2.84} & 3.50 & \textbf{4.10} \\
\midrule
\multirow{8}{*}{\rotatebox{90}{\small Qwen3-8B}}
& AR                         & 1.00 & 1.00 & 1.00 & 1.00 & 1.00 & 1.00 & 1.00 & 1.00 & 1.00 & 1.00 & 1.00 \\
& \pld{}                     & 2.50 & 2.61 & 1.65 & 1.71 & 1.31 & 1.36 & 1.45 & 1.50 & 1.67 & 1.73 & 1.72 \\
& \tr{}                      & 3.77 & 4.50 & 2.60 & 3.03 & 2.18 & 2.55 & 2.43 & 2.81 & 2.44 & 2.86 & 2.68 \\
& Lookahead                  & 2.42 & 2.42 & 1.93 & 1.88 & 1.52 & 1.47 & 1.75 & 1.68 & 1.78 & 1.74 & 1.88 \\
& REST                       & 1.22 & 1.14 & 1.21 & 1.16 & 1.19 & 1.12 & 1.11 & 1.04 & 1.11 & 1.03 & 1.17 \\
& SuffixDecoding             & 2.15 & 2.21 & 2.22 & 2.38 & 1.71 & 1.68 & 2.00 & 1.94 & 2.06 & 2.18 & 2.03 \\
\rowcolor{gray!15}
& Sequoia                    & 0.82 & \textbf{7.32} & 0.85 & \textbf{7.58} & 0.76 & \textbf{6.81} & 0.82 & \textbf{7.34} & 0.70 & \textbf{6.22} & 0.79 \\
& \method{} (Ours)           & \textbf{4.28} & 5.19 & \textbf{3.23} & 3.87 & \textbf{2.34} & 2.78 & \textbf{2.70} & 3.20 & \textbf{2.92} & 3.51 & \textbf{3.09} \\
\midrule
\multirow{8}{*}{\rotatebox{90}{\small Vicuna-13B}}
& AR                         & 1.00 & 1.00 & 1.00 & 1.00 & 1.00 & 1.00 & 1.00 & 1.00 & 1.00 & 1.00 & 1.00 \\
& \pld{}                     & 2.34 & 2.47 & 1.33 & 1.36 & 1.62 & 1.66 & 1.49 & 1.51 & 1.35 & 1.36 & 1.63 \\
& \tr{}                      & 2.71 & 3.34 & 2.56 & 3.02 & 2.49 & 2.94 & 2.40 & 2.78 & 2.12 & 2.46 & 2.46 \\
& Lookahead                  & 2.14 & 2.26 & 1.58 & 1.62 & 1.73 & 1.78 & 1.64 & 1.67 & 1.47 & 1.49 & 1.71 \\
& REST                       & 1.61 & 1.66 & 1.18 & 1.21 & 1.19 & 1.20 & 1.04 & 1.04 & 1.11 & 1.11 & 1.23 \\
& SuffixDecoding             & 2.95 & 3.36 & 1.82 & 1.90 & 2.14 & 2.26 & 1.91 & 1.97 & 1.62 & 1.68 & 2.09 \\
\rowcolor{gray!15}
& EAGLE-2                    & 3.62 & \textbf{6.03} & \textbf{3.18} & \textbf{4.98} & \textbf{3.34} & \textbf{5.24} & \textbf{3.41} & \textbf{5.30} & \textbf{2.78} & \textbf{4.33} & \textbf{3.27} \\
& \method{} (Ours)           & \textbf{3.68} & 4.59 & 2.66 & 3.14 & 2.84 & 3.37 & 2.66 & 3.11 & 2.30 & 2.67 & 2.83 \\
\midrule
\multirow{8}{*}{\rotatebox{90}{\small Vicuna-33B$^\dagger$}}
& AR                         & 1.00 & 1.00 & 1.00 & 1.00 & 1.00 & 1.00 & 1.00 & 1.00 & 1.00 & 1.00 & 1.00 \\
& \pld{}                     & 1.85 & 1.99 & 1.29 & 1.33 & 1.36 & 1.40 & 1.27 & 1.30 & 1.14 & 1.15 & 1.38 \\
& \tr{}                      & 1.97 & 2.62 & 2.14 & 2.63 & 1.97 & 2.37 & 1.95 & 2.29 & 1.72 & 2.01 & 1.95 \\
& Lookahead                  & 1.86 & 2.02 & 1.49 & 1.55 & 1.56 & 1.56 & 1.51 & 1.49 & 1.26 & 1.23 & 1.54 \\
& REST                       & 1.48 & 1.54 & 1.14 & 1.18 & 1.18 & 1.16 & 1.08 & 1.05 & 1.09 & 1.05 & 1.19 \\
& SuffixDecoding             & 2.26 & 2.59 & 1.69 & 1.81 & 1.77 & 1.82 & 1.70 & 1.72 & 1.36 & 1.37 & 1.76 \\
\rowcolor{gray!15}
& EAGLE-2                    & \textbf{3.48} & \textbf{5.15} & \textbf{3.42} & \textbf{4.73} & \textbf{3.36} & \textbf{4.65} & \textbf{3.32} & \textbf{4.55} & \textbf{2.60} & \textbf{3.58} & \textbf{3.24} \\
& \method{} (Ours)           & 2.80 & 3.69 & 2.53 & 3.10 & 2.37 & 2.83 & 2.35 & 2.74 & 1.87 & 2.15 & 2.38 \\
\bottomrule
\end{tabular}%
}
\end{table*}

\noindent \Cref{tab:main-full} reports compression ratio~($\tau$) and wall-clock speedup (Spd) for all five models across five benchmarks.
\method{} achieves the highest average speedup on three of five models; EAGLE-2 leads on Vicuna-13B and Vicuna-33B thanks to its trained draft head (official EAGLE heads throughout) and has no published head for Qwen3-8B.
For the Vicuna-33B EAGLE-2 GSM8K cell, speedup is computed against the AR baseline of its own run environment; $\tau$ is environment-independent.
The per-benchmark $\tau$ values are the ground truth for the synergy analysis in \Cref{app:synergy}.

\FloatBarrier

\section{Algorithm Details}
\label{app:algorithms}

\subsection{Spine Tree Construction}
\label{app:build-spine}

\begin{algorithm}[h]
\small
\caption{\textsc{BuildSpineTree}}
\label{alg:build-spine}
\begin{algorithmic}[1]
\Require Anchor token $t_a$;\; \pld{} draft chain $\mathbf{d}=(d_1,\dots,d_m)$
\Require Adjacency table $\mathcal{A}$;\; node budget $B$;\; max depth $D$
\Require Spine ratio $r \in (0,1)$;\; spine-branch ratio $\rho \in (0,1)$
\Ensure Spine tree $\mathcal{T}$ with $\leq B$ nodes and tree attention mask
\Statex
\State $b_s \leftarrow \min(m,\,\lfloor B r \rfloor)$ \Comment{PLD spine node count}
\State $b_r \leftarrow \lfloor (B{-}1{-}b_s)(1{-}\rho) \rfloor$ \Comment{Root-level branch count}
\State $b_\rho \leftarrow B - 1 - b_s - b_r$ \Comment{Spine-level branch count}
\Statex
\State $d_0 \leftarrow t_a$ \Comment{Anchor as spine root}
\For{$i = 1$ \textbf{to} $b_s$} \Comment{Step 1: lay PLD spine}
\State Add $d_i$ as child of $d_{i-1}$
\EndFor
\Statex
\State Attach top-$b_r$ successors from $\mathcal{A}(t_a)$ to root \Comment{Step 2: root branches}
\Statex
\For{$i = 1$ \textbf{to} $b_s$} \Comment{Step 3: spine branches (harmonic decay)}
\State $a_i \leftarrow \bigl\lfloor b_\rho \cdot \tfrac{1/i}{\sum_{j=1}^{b_s} 1/j} \bigr\rfloor$ \Comment{Allocation for spine node $d_i$}
\State Attach top-$a_i$ successors from $\mathcal{A}(d_i)$ to $d_i$
\EndFor
\Statex
\State BFS-extend all branch leaves through $\mathcal{A}$ up to depth $D$ \Comment{Step 4}
\State \Return $\mathcal{T}$
\end{algorithmic}
\end{algorithm}

\noindent Duplicate entries (e.g., spine continuation $d_{i+1}$ already present in $\mathcal{A}(d_i)$) are excluded during attachment to avoid redundant nodes.
The BFS extension in Step~4 is a standard breadth-first expansion: starting from each branch leaf, it greedily appends top-scoring successors from $\mathcal{A}$ up to depth~$D$ within the remaining budget.

\subsection{Decoding Loop}
\label{app:algorithm}

\begin{algorithm}[h]
\caption{\method{} Decoding Loop}
\label{alg:goose}
\begin{algorithmic}[1]
\Require Target model $\mathcal{M}$;\; prompt $\mathbf{x}$;\; max tokens $N$
\Require Budget $B$;\; spine-branch ratio $\rho$;\; max depth $D$;\; bypass threshold $\ell_{\text{byp}}$;\; adjacency width $K$
\Ensure Generated tokens $\mathbf{y}$, identical to autoregressive output
\Statex
\State Build context index from $\mathbf{x}$;\; init adjacency table $\mathcal{A}$ with top-$K$ width
\State $\boldsymbol{\ell} \leftarrow \mathcal{M}(\mathbf{x})$;\; update $\mathcal{A}$ with top-$K$ entries per position from $\boldsymbol{\ell}$
\State $t_{\text{anc}} \leftarrow \argmax \boldsymbol{\ell}[-1]$;\; $\hat{p}_s \leftarrow 0.3$ \Comment{Running estimate of spine acceptance $p_s$}
\While{$|\mathbf{y}| < N$ and not \texttt{EOS}}
\State $(\mathbf{d}, c) \leftarrow \textsc{ContextMatch}(\mathbf{y})$ \Comment{Draft chain $\mathbf{d}$; consensus flag $c$}
\If{$\mathbf{d} \neq \emptyset$ \textbf{and} ($|\mathbf{d}| \geq \ell_{\text{byp}}$ \textbf{or} $c$)}
\State $(\mathbf{y}_{\text{acc}},\, b) \leftarrow$ linear-verify $\mathbf{d}$ with $\mathcal{M}$ \Comment{Bypass: high-confidence PLD}
\ElsIf{$\mathbf{d} \neq \emptyset$ \textbf{or} $\mathcal{A}$ has successors for $t_{\text{anc}}$}
\State $r \leftarrow \textsc{SpineRatioTier}(\hat{p}_s)$ \Comment{Adaptive spine ratio (\Cref{sec:consensus})}
\State $\mathcal{T} \leftarrow$ \Call{BuildSpineTree}{$t_{\text{anc}}, \mathbf{d}, \mathcal{A}, r, B, \rho, D$}
\State $(\mathbf{y}_{\text{acc}},\, b) \leftarrow$ \Call{UnifiedGreedyWalk}{$\mathcal{M}, \mathcal{T}$}
\Else
\State $b \leftarrow \argmax\, \mathcal{M}(t_{\text{anc}})$;\; $\mathbf{y}_{\text{acc}} \leftarrow \emptyset$ \Comment{AR fallback}
\EndIf
\State Update $\mathcal{A}$ from all logits of this cycle \Comment{Including rejected branches}
\State Update $\hat{p}_s$ via EMA;\; $t_{\text{anc}} \leftarrow b$;\; append $\mathbf{y}_{\text{acc}} \| b$ to $\mathbf{y}$
\EndWhile
\end{algorithmic}
\end{algorithm}

\method{} routes each cycle by source availability and confidence (\Cref{sec:consensus}): (1)~high-confidence or long context match triggers linear bypass; (2)~any draft source available triggers the spine tree with unified greedy walk verification; (3)~no source available triggers AR fallback.
When only transitions (no context match) are available, $\mathbf{d} = \emptyset$ and the tree degenerates to a \tr{}-only tree; when only a context match is available, the tree reduces to a linear chain. Both are special cases that preserve the expected-yield guarantee of \Cref{prop:monotonicity}.

\textsc{ContextMatch} (called at line~5 of \Cref{alg:goose}) is a standard $n$-gram lookup: given the generated sequence and a set of query lengths $\{3,4,5\}$, it finds the longest matching prefix in the context, copies the continuation as the draft chain~$\mathbf{d}$, and returns a consensus flag~$c$ (true iff $\geq 2$ matches agree on the first continuation token).

\subsection{Unified Greedy Walk Verification}
\label{app:verify}

\begin{algorithm}[h]
\small
\caption{\textsc{UnifiedGreedyWalk}: Verification with Source Priority}
\label{alg:verify}
\begin{algorithmic}[1]
\Require Model $\mathcal{M}$;\; spine tree $\mathcal{T}$ (nodes tagged \pld{} or \tr{})
\Ensure Accepted token list;\; bonus token $b$
\State $\boldsymbol{\ell} \leftarrow \mathcal{M}(\mathcal{T})$ with tree attention mask \Comment{Single forward pass}
\State $\hat{x}_i \leftarrow \argmax \boldsymbol{\ell}_i$ for each node $i$ \Comment{Greedy prediction per node}
\Statex
\State $v \leftarrow \text{root}$;\; accepted $\leftarrow []$
\While{$v$ has children in $\mathcal{T}$}
\If{$\exists$ \pld{}-child $v'$ of $v$: $\mathcal{T}[v'] = \hat{x}_v$} \Comment{Priority 1: spine}
\State accepted.append($v'$);\; $v \leftarrow v'$;\; \textbf{continue}
\EndIf
\If{$\exists$ \tr{}-child $v'$ of $v$: $\mathcal{T}[v'] = \hat{x}_v$} \Comment{Priority 2: branch}
\State accepted.append($v'$);\; $v \leftarrow v'$;\; \textbf{continue}
\EndIf
\State \textbf{break} \Comment{No child matches}
\EndWhile
\Statex
\State $b \leftarrow \hat{x}_{v}$ \Comment{Bonus token from last accepted position}
\State \Return accepted, $b$
\end{algorithmic}
\end{algorithm}

At each tree node, the walk checks \pld{} (spine) children before \tr{} (branch) children, giving priority to the high-acceptance source.
Once the walk transitions from spine to branch, it cannot return, producing the ``spine continuation'' path described in \Cref{sec:continuation}.
The walk naturally yields the longest accepted path and covers all three outcome categories: pure PLD (spine only), spine continuation (PLD prefix then TR suffix), or pure TR (root branch only).

\clearpage
\section{Implementation and Setup}
\label{app:impl-setup}

\subsection{Implementation Details}
\label{app:impl}

This section provides the full engineering details summarized in \Cref{sec:consensus}.

\paragraph{Dense GPU bigram adjacency table.}
The adjacency table $\mathcal{A}$ (\Cref{sec:tree}) is implemented as a two-tier structure (dense GPU unigram tensors plus a bigram hash table).
\textbf{Tier 1 (Unigram):} Two dense tensors of shape $|V|{\times}K$ on GPU store, respectively, the token IDs and logit-derived scores of the top-$K$ successors for each vocabulary token; lookup is $O(1)$ with no CPU transfer.
\textbf{Tier 2 (Bigram):} A hash table mapping $(t_{i-1}, t_i) \mapsto \text{top-}K$ successors, exploiting the fact that two-token context substantially sharpens predictions (e.g., successors of \texttt{f} after \texttt{def} are much more predictable than \texttt{f} alone).

\paragraph{Confidence-aware branch width.}
Branch width scales with the transition score in $\mathcal{A}$: each successor's allocated children (from the $1/i$ harmonic schedule in Step~3 of \Cref{alg:build-spine}) are further modulated by its score relative to its siblings, and successors whose score falls below 0.01 are pruned entirely.
This focuses the node budget on high-confidence successors while keeping the total tree size within $B$.

\paragraph{Aggressive logit harvesting.}
Every forward pass's logits, including those at \textit{rejected branches} and at prefill, are harvested into the adjacency table via a batched \texttt{topk} ($<$200\,\textmu{}s overhead).
This populates $\mathcal{A}$ far faster than the verified-positions-only strategy used in prior work.

\subsection{Datasets and Model Details}
\label{app:datasets}

\begin{table}[h]
\centering
\small
\caption{Benchmark details.}
\label{tab:datasets}
\begin{tabular}{@{}l l r r@{}}
\toprule
\textbf{Benchmark} & \textbf{Domain} & \textbf{Samples} & \textbf{Max tokens} \\
\midrule
HumanEval & Code & 164 & 512 \\
MBPP & Code & 500 & 512 \\
ClassEval & Code & 100 & 512 \\
GSM8K & Math & 1{,}319 & 1{,}024 \\
MT-Bench & Dialogue & 80 & 1{,}024 \\
\bottomrule
\end{tabular}
\end{table}

\subsection{Hyperparameters and Experimental Setup}
\label{app:hyperparams}

\Cref{tab:hyperparams} lists all hyperparameters and \Cref{tab:hw} the hardware and software configuration.

\begin{table*}[!t]
\centering
\begin{minipage}[t]{0.48\textwidth}
\vspace{0pt}
\centering
\small
\caption{All hyperparameters (fixed across datasets and models).}
\label{tab:hyperparams}
\resizebox{\linewidth}{!}{
\begin{tabular}{lr}
\toprule
\textbf{Parameter} & \textbf{Value} \\
\midrule
Context match n-gram lengths      & ${3, 4, 5}$ \\
Max spine continuation     & 20 \\
Transition top-$K$        & 10 \\
Tree node budget $B$     & 60 \\
Max branch (transition) depth $D$ & 6 \\
Min score threshold      & 0.01 \\
Spine-branch ratio $\rho$    & 0.5 \\
EMA smoothing coefficient       & 0.3 \\
Spine ratio tiers $(r)$      & $\hat{p}_s{<}0.2{\to}0.15$, ${<}0.4{\to}0.30$, ${\geq}0.4{\to}0.50$ \\
Linear bypass threshold  & 8 tokens \\
\bottomrule
\end{tabular}
}
\end{minipage}%
\hfill
\begin{minipage}[t]{0.48\textwidth}
\vspace{0pt}
\centering
\small
\caption{Hardware and software configuration.}
\label{tab:hw}
\resizebox{\linewidth}{!}{
\begin{tabular}{l l}
\toprule
\textbf{Item} & \textbf{Specification} \\
\midrule
GPU (7B--13B)   & NVIDIA A40 (48\,GB GDDR6) \\
GPU (33B)       & 2$\times$A100-PCIE-40GB (80\,GB total) \\
CUDA Toolkit    & 12.8 \\
PyTorch         & 2.9.1+cu128 \\
Transformers    & 4.57 (Hugging Face) \\
Precision       & FP16 (\texttt{float16}) \\
\bottomrule
\end{tabular}
}
\end{minipage}
\end{table*}

\section{Extended Experimental Analysis}
\label{app:ext-analysis}

\subsection{Synergy Decomposition and Hyperparameter Sensitivity}
\label{app:synergy}

\begin{figure}[h]
\centering
\begin{minipage}[t]{0.50\textwidth}
\centering
\includegraphics[width=\linewidth]{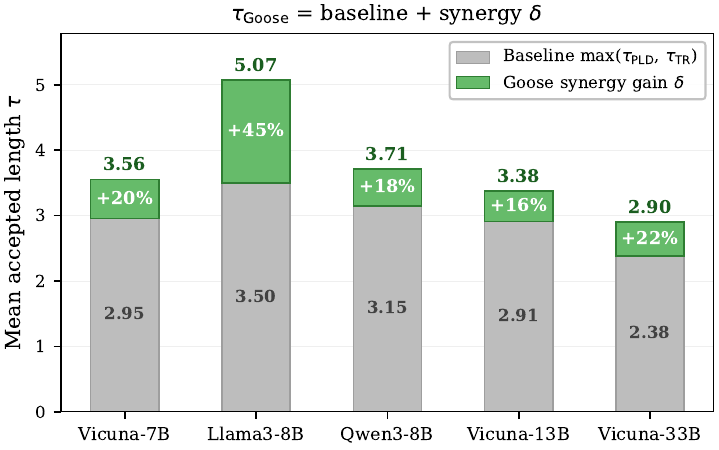}
\end{minipage}%
\hfill
\begin{minipage}[t]{0.48\textwidth}
\centering
\includegraphics[width=\linewidth]{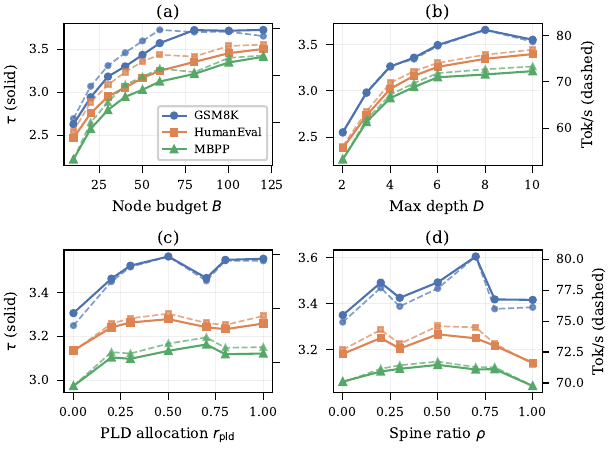}
\end{minipage}
\caption{\textbf{(a)}~\method{}'s mean~$\tau$ decomposed into the best standalone baseline $\max(\tau_{\pld{}},\tau_{\tr{}})$ (gray) and the synergy gain from the spine-tree topology (green); the gain ranges from $+16\%$ (Vicuna-13B) to $+45\%$ (Llama-3-8B).  \textbf{(b)}~Hyperparameter sensitivity (Qwen3-8B): each panel sweeps one parameter; $\tau$ plateaus at $B{\geq}60$ and $D{\geq}6$, and varies by fewer than 0.6~$\tau$ units across $r$ and~$\rho$. Panels (c)/(d) sweep the spine ratio $r$ and the spine-branch ratio $\rho$ of \Cref{sec:tree}.}
\label{fig:heterogeneity-decomp}
\vspace{-6pt}
\end{figure}

The synergy ratio $\tau_{\method{}}/\max(\tau_{\pld}, \tau_{\tr})$ exceeds~1.0 on every model (range 1.16--1.45$\times$, mean 1.24$\times$), confirming that the gain is a consistent structural phenomenon across all 25 settings.
The single highest cell is Llama-3-8B/GSM8K at 2.05, inflated by repetition (see \Cref{app:robustness}); excluding that cell, per-model synergy ranges from 1.16 to 1.24.

\subsection{Cross-Task Robustness}
\label{app:robustness}

\begin{figure}[h]
\centering
\begin{minipage}[t]{0.52\linewidth}
\centering
\includegraphics[width=\linewidth]{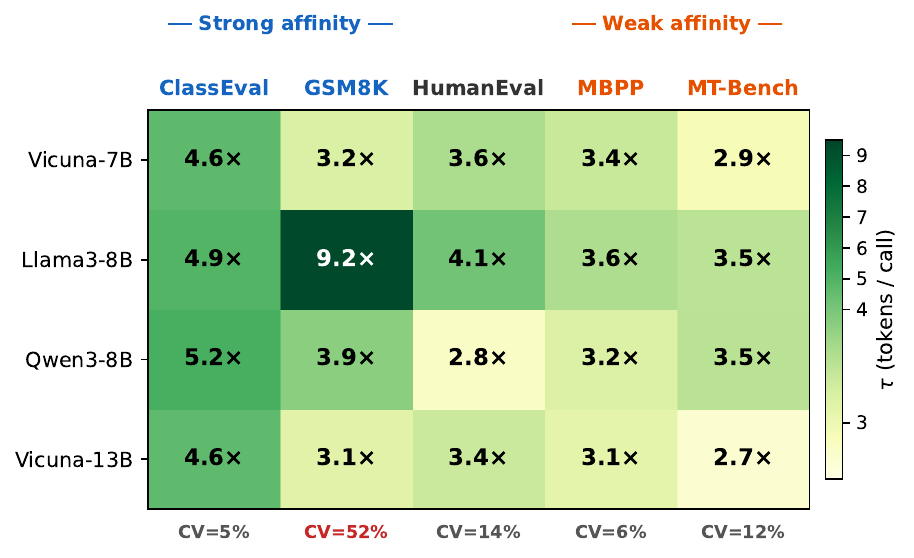}
\end{minipage}%
\hfill
\begin{minipage}[t]{0.46\linewidth}
\centering
\includegraphics[width=\linewidth]{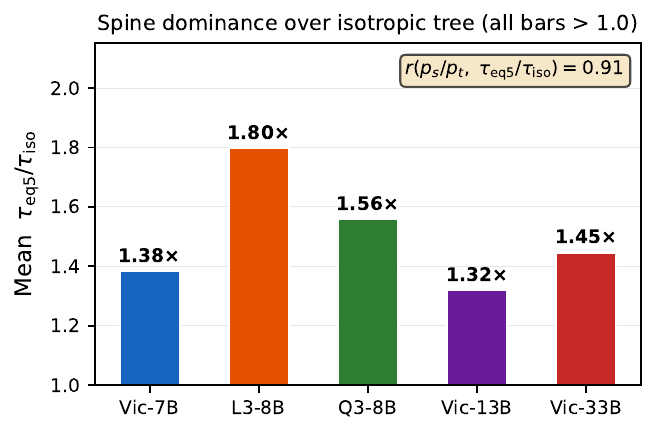}
\end{minipage}
\caption{\textbf{(a)}~$\taumetric$ across four A40 models and five datasets grouped by \method{}'s affinity. CV below each column shows cross-model coefficient of variation.
\textbf{(b)}~Per-model mean $\tau_{\text{eq5}}/\tau_{\text{iso}}$ across 24 settings; all bars exceed~1.0, confirming spine dominance (Pearson $r{=}0.91$).}
\label{fig:eq5-verify}
\end{figure}

Strong-affinity datasets (ClassEval, GSM8K) share repetitive structure, such as class templates or chain-of-thought boilerplate, which yields high \pld{}-path acceptance.
Weak-affinity datasets (MBPP, MT-Bench) contain shorter, less repetitive outputs that shift budget toward branches (most clearly on MBPP), yet still achieve $\taumetric > 2.6\times$.
The cross-model CV is below 14\% for four of five datasets: GSM8K is the exception (CV$=$52.2\%) because Llama-3-8B exhibits degenerate repetition loops on arithmetic tasks, inflating its $\taumetric$ to 9.25, an anomaly that disappears when that single model is excluded (CV drops to 9.5\%).
Overall, the strong/weak gap is moderate (mean 4.85 vs.\ 3.24, ratio 1.50$\times$; the strong-affinity mean is lifted by the Llama-3-8B GSM8K outlier noted above), showing that \method{} degrades gracefully rather than collapsing on unfavorable inputs; per-prompt variance within each cell is analyzed in \Cref{app:prompt-variance}.

% Hyperparameter Sensitivity moved to main text (Section 4 Analysis)

\subsection{Prompt-Level Variance}
\label{app:prompt-variance}

Across all 25 model$\times$dataset settings, median~$\tau$ tracks the mean closely (Pearson $r > 0.98$) and the interquartile range (IQR) stays within $0.5$--$2.3\times$ for 23 of 25 cases.
The two exceptions (Llama-3-8B/GSM8K and Qwen3-8B/ClassEval) are traced to degenerate repetition loops that inflate $\taumetric$; excluding the top~5\% of prompts reduces their IQR to levels consistent with other settings.
Overall, the tight IQR confirms that \method{}'s gains are broadly distributed across prompts rather than driven by outliers.

\subsection{EAGLE-2/3: Chat-Template and Raw-Prompt Protocols}
\label{app:eagle3-raw}

\begin{table}[h]
\centering\small
\caption{\method{} vs.\ EAGLE-3 under EAGLE's native chat-template protocol (Llama-3.1-8B, A40), the primary protocol for this baseline (\Cref{sec:eagle3}). $\tau$ / tok/s.}
\label{tab:eagle3-chat}
\begin{tabular}{@{}l cc cc@{}}
\toprule
& \multicolumn{2}{c}{\method{}} & \multicolumn{2}{c}{EAGLE-3} \\
\cmidrule(lr){2-3}\cmidrule(lr){4-5}
\textbf{Dataset} & $\tau$ & tok/s & $\tau$ & tok/s \\
\midrule
HumanEval & 2.59 & 66.6 & \textbf{5.79} & \textbf{109.0} \\
ClassEval & \textbf{5.35} & \textbf{131.3} & 5.34 & 94.5 \\
MT-Bench  & 2.22 & 57.0 & \textbf{5.36} & \textbf{102.0} \\
MBPP      & 2.36 & 61.4 & \textbf{5.69} & \textbf{108.7} \\
GSM8K     & 2.76 & 70.3 & \textbf{5.69} & \textbf{107.8} \\
\bottomrule
\end{tabular}
\end{table}

\begin{table}[h]
\centering\small
\caption{Raw-prompt protocol (our paper's protocol, applied to every baseline): Llama-3.1-8B, A40, full $n$. $\tau$ / tok/s.}
\label{tab:eagle3-raw}
\begin{tabular}{@{}l cc cc cc@{}}
\toprule
& \multicolumn{2}{c}{\method{}} & \multicolumn{2}{c}{EAGLE-2} & \multicolumn{2}{c}{EAGLE-3} \\
\cmidrule(lr){2-3}\cmidrule(lr){4-5}\cmidrule(lr){6-7}
\textbf{Dataset} & $\tau$ & tok/s & $\tau$ & tok/s & $\tau$ & tok/s \\
\midrule
HumanEval & \textbf{4.50} & \textbf{114.8} & 4.14 & 62.2 & 3.29 & 62.6 \\
ClassEval & \textbf{5.54} & \textbf{133.4} & 4.11 & 58.9 & 4.18 & 74.8 \\
MT-Bench  & 3.63 & \textbf{90.7} & 3.51 & 52.2 & \textbf{3.87} & 72.6 \\
MBPP      & 3.51 & \textbf{91.0} & \textbf{4.19} & 63.6 & 3.55 & 68.5 \\
GSM8K     & \textbf{8.90} & \textbf{221.7} & 4.28 & 63.6 & 4.62 & 86.2 \\
\bottomrule
\end{tabular}
\end{table}

On raw prompts (\Cref{tab:eagle3-raw}; chat protocol in \Cref{tab:eagle3-chat}) EAGLE's chat-trained head loses $\tau$, and its per-cycle draft forward erases the remaining lead: \method{} wins wall-clock on all five datasets, even where EAGLE's $\tau$ is comparable or higher (MT-Bench, MBPP).
GSM8K is fixed-budget (96\% truncated), so its $\tau$ and tok/s are inflated for the repetition-driven \method{} and should be read qualitatively.

\subsection{Per-Accept Source Attribution}
\label{app:source-attribution}

A natural concern is that when input $n$-gram matches are rare the spine degenerates.
We instrumented \method{} to label every accepted token by its source (\pld{} spine vs.\ \tr{} branch) on four models spanning 1.1B--8B.
\Cref{tab:source-attribution} reports \pld{}'s share of accepted tokens (\tr{} is the remainder).

\begin{table}[h]
\centering\small
\caption{\pld{} share of accepted tokens (\%); \tr{} supplies the remainder. Llama-3-8B and Qwen3-8B are paper models (full $n$); the two ${\sim}1$B models are robustness checks ($n{=}30$).}
\label{tab:source-attribution}
\begin{tabular}{@{}l rrrrr@{}}
\toprule
\textbf{Model} & ClassEval & MT-Bench & HumanEval & GSM8K & MBPP \\
\midrule
Qwen3-8B       & 70.8 & 53.3 & 33.0 & 48.3 & 38.3 \\
Llama-3-8B     & 62.1 & 53.5 & 53.4 & 83.8 & 45.2 \\
Llama-3.2-1B   & 61.4 & 35.1 & 60.7 & 37.9 & 36.7 \\
TinyLlama-1.1B & 66.1 & 84.5 & 33.2 & 59.8 & 33.4 \\
\bottomrule
\end{tabular}
\end{table}

\pld{}'s share is generally highest on template-heavy ClassEval (61--71\%) and on Llama-3-8B's repetition-prone GSM8K (84\%), and generally lowest on HumanEval/MBPP, where it falls to about a third.
Even at its sparsest, \tr{} supplies the remaining majority of accepts and $\tau$ does not collapse (e.g., $\tau{=}2.78$ on Qwen3-8B HumanEval, still $+12\%$ over the equal-budget isotropic tree, \Cref{tab:topology}): \tr{} needs no prompt match (it is built from the model's own logits), so the tree stays full when the spine is empty.
The spine therefore does not degenerate when $n$-gram matches are rare; the budget shifts to branches, which is precisely the adaptive behavior of \Cref{sec:consensus}.

\subsection{Swapping the Spine Source: PLD \texorpdfstring{$\to$}{->} SuffixDecoding}
\label{app:cross-source}

\pld{} and \tr{} are one instance of \method{}'s two acceptance classes, not a limit of the method: any source emitting a high-acceptance chain can serve as the spine, and any source of plausible alternatives as branches.
We show this by replacing the \pld{} spine with \emph{SuffixDecoding's suffix tree} (chain mode, official ArcticInference cache, not a re-implementation) while keeping the \tr{} branches and the topology unchanged.
The spine source is the only difference between the two arms; consensus bypass is off on both sides.
$\tau$ rises in all 15 model$\times$dataset cells ($+3\%$ to $+47\%$, \Cref{tab:cross-source}):

\begin{table}[h]
\centering\small
\caption{$\tau$ with the \pld{} spine swapped for SuffixDecoding's suffix tree (TR branches and topology unchanged, bypass off on both sides).}
\label{tab:cross-source}
\resizebox{\linewidth}{!}{
\begin{tabular}{@{}l ccccc@{}}
\toprule
$\tau$: \pld{}$\to$Suffix spine & HumanEval & MBPP & GSM8K & MT-Bench & ClassEval \\
\midrule
Llama-3.1-8B (full $n$) & 4.21$\to$\textbf{4.80} & 3.30$\to$\textbf{3.63} & 6.28$\to$\textbf{9.26} & 3.22$\to$\textbf{3.91} & 5.00$\to$\textbf{5.89} \\
Llama-3.2-1B ($n{=}30$) & 3.75$\to$\textbf{4.33} & 3.04$\to$\textbf{3.27} & 3.38$\to$\textbf{3.60} & 2.55$\to$\textbf{2.63} & 4.29$\to$\textbf{5.00} \\
TinyLlama-1.1B ($n{=}30$) & 2.84$\to$\textbf{3.08} & 2.88$\to$\textbf{3.09} & 4.56$\to$\textbf{5.41} & 6.19$\to$\textbf{8.92} & 5.03$\to$\textbf{5.74} \\
\bottomrule
\end{tabular}}
\end{table}

The suffix tree is a stronger chain source than \pld{} (empirical-frequency scoring vs.\ longest-exact match), yet the anisotropic topology and verification are unchanged: the gain composes with whichever spine source is strongest.
On Llama-3.1-8B the $\tau$ gain carries to wall-clock ($+7$--$42\%$ tok/s).
The swap also preserves losslessness: verification accepts only the target model's own argmax token regardless of draft source, and we confirmed token-identical output to greedy AR under fp32 for the swapped variant (Llama-3.2-1B HumanEval: 10/10 (method, sample) pairs across \{\method{}, \method{}-Suffix-spine\} $\times$ 5 prompts).

\subsection{Batch-Size Sweep}
\label{app:batch}

We sweep batch size with cross-request batched verification for \method{} against padded request-batched AR, on Llama-3-8B with the \emph{same} HuggingFace stack on both sides; a self-test verifies the batched decoder is batch-lossless (batched and bs${=}1$ decoding produce token-identical output, checked in fp32).
The \method{} prototype here is the bypass-off core decoder (budget 32), so its bs=1 speedup (${\sim}1.7$--$2.0\times$) understates the full system (3.36$\times$ on HumanEval); the object of interest is the \emph{shape} of the curves in \Cref{tab:batch-sweep}.

\begin{table}[h]
\centering\small
\caption{Aggregate tok/s (total tokens / wall-clock) vs.\ batch size, Llama-3-8B HumanEval ($n{=}164$) and GSM8K ($n{=}256$), greedy, fp16.}
\label{tab:batch-sweep}
\begin{tabular}{@{}ll rrrrr@{}}
\toprule
\textbf{GPU / task} & \textbf{Method} & bs=1 & bs=4 & bs=8 & bs=16 & bs=32 \\
\midrule
A40, HumanEval & \method{} & \textbf{59.0} & \textbf{129.8} & 141.5 & 149.4 & 143.0 \\
               & AR        & 31.0 & 96.5 & \textbf{168.2} & \textbf{265.0} & \textbf{366.1} \\
A40, GSM8K     & \method{} & \textbf{62.4} & \textbf{138.5} & \textbf{155.7} & 168.7 & 156.4 \\
               & AR        & 31.1 & 79.0 & 142.1 & \textbf{244.9} & \textbf{382.2} \\
RTX 3090, HumanEval & \method{} & \textbf{73.6} & 132.0 & 131.1 & 124.1 & --- \\
               & AR        & 42.4 & \textbf{136.1} & \textbf{228.3} & \textbf{339.0} & --- \\
RTX 3090, GSM8K & \method{} & \textbf{78.6} & \textbf{138.4} & 139.1 & 133.2 & --- \\
               & AR        & 42.2 & 111.7 & \textbf{197.9} & \textbf{327.3} & --- \\
\bottomrule
\end{tabular}
\end{table}

Speculative decoding and request batching draw on the same idle compute.
At bs=1, decoding is bandwidth-bound and \method{}'s tree verification accepts several tokens per weight load almost for free.
Batching amortizes those same weight loads across requests, so AR scales roughly linearly; \method{}, whose forward already processes batch~$\times$~tree-size tokens, saturates compute earlier and plateaus (on the A40, aggregate throughput rises ${\sim}2.6\times$ from bs=1 to the bs=16 plateau; on the 3090 it rises ${\sim}1.7\times$ and saturates by bs=4--8).
The crossover is dataset- and hardware-dependent but always present and always ordered the same way: GSM8K (longer generations, higher $\tau$) crosses later than HumanEval on both GPUs, and the A40 crosses later than the 3090.
GPU-utilization profiling during these runs is consistent with compute saturation rather than host overhead.
The prototype retains finished requests as padding rather than evicting them, wasting some large-batch compute; a production-grade batched implementation is left as future work.

\subsection{SuffixDecoding Mechanism Decomposition and Merged-Tree Variant}
\label{app:suffix-mech}

\Cref{tab:suffix-l318b} reports the head-to-head of \Cref{sec:suffix} on SuffixDecoding's own evaluation model.
Cell accounting for \Cref{sec:suffix}: the grid is 6 models (our five plus Llama-3.1-8B) $\times$ 13 datasets (our five plus SuffixDecoding's eight) $=78$ cells, all with $\tau$; ``full accounting'' means the 52 cells (four main-grid A40 models $\times$ 13 datasets; the models are Vicuna-7B, Llama-3-8B, Qwen3-8B, and Vicuna-13B) where wall-clock was logged under the paper harness, and the 32 own-dataset cells (those four models $\times$ its eight datasets) are a subset of the 52.

\begin{table}[h]
\centering\small
\caption{\method{} vs.\ SuffixDecoding (tree mode, the stronger of its two training-free modes) on Llama-3.1-8B (A40, greedy, full $n$). $\tau$ / tok/s. $^*$GSM8K fixed-budget.}
\label{tab:suffix-l318b}
\resizebox{\linewidth}{!}{
\begin{tabular}{@{}l ccccc@{}}
\toprule
& HumanEval & MBPP & MT-Bench & ClassEval & GSM8K$^*$ \\
\midrule
\method{}            & \textbf{4.50}/\textbf{114.8} & \textbf{3.51}/\textbf{91.0} & \textbf{3.63}/\textbf{90.7} & \textbf{5.54}/\textbf{133.4} & \textbf{8.90}/\textbf{221.7} \\
SuffixDecoding (tree) & 2.98/86.6 & 2.14/63.5 & 2.33/68.0 & 3.97/108.9 & 5.59/162.0 \\
\bottomrule
\end{tabular}}
\end{table}

On SuffixDecoding's own evaluation datasets (Llama-3.1-8B, $\tau$): SWE-Bench-Lite 4.04 vs.\ 2.78 ($+45\%$), Magicoder 2.78 vs.\ 1.67 ($+67\%$), WildChat 3.01 vs.\ 1.90 ($+58\%$), and $+20$--$64\%$ across the five Spec-Bench tasks.

\Cref{tab:suffix-merged} backs the mechanism claims of \Cref{sec:suffix} with the full 4-model $\times$ 5-dataset grid ($\tau$; $n{=}100$ ClassEval, $200$ GSM8K/MBPP, $164$ HumanEval, $80$ MT-Bench; max 512 new tokens; A40).
Arms: \method{} (cold, per-request, budget 60); SuffixDecoding official two-tree cache, cold and warm (cache seeded with the 8 preceding requests of the same dataset, sequential order); and our merged single-tree variant, warm.
The merged tree is \emph{our} construction (the published method keeps two trees and selects per cycle); it is built by per-sequence insertion and its semantics were machine-checked against the official code (35/35 assertions, including 300-trial fuzz bit-exactness).
In the main comparison (\Cref{sec:suffix}), all baselines run cold and per-request: SuffixDecoding's larger published agentic speedups use a warmed cache, a serving setting outside this benchmark rather than a capability we disabled, and its Hybrid modes are excluded as not training-free (they add a trained EAGLE-3 drafter).
Suffix arms use a max-spec factor of 2 (SuffixDecoding's per-match cap on draft length, $2\times$ the match length), which outperformed its default factor of 1 in all 20 cells, so the baseline is tuned generously.
Because of fp16 argmax near-ties, arms can diverge on a minority of samples (per-arm total token counts differ by up to 2.7\%).
We therefore compare $\tau$ (tokens per model call), computed on each arm's own trace.

\begin{table}[h]
\centering\small
\caption{$\tau$ on the full grid: \method{} (cold) vs.\ SuffixDecoding official (cold/warm) and the merged-tree variant (warm). \textbf{Bold} = column best per row.}
\label{tab:suffix-merged}
\begin{tabular}{@{}ll cccc@{}}
\toprule
\textbf{Model} & \textbf{Dataset} & \method{} & Suf.\ (cold) & Suf.\ (warm) & Merged (warm) \\
\midrule
Llama-3-8B   & ClassEval & \textbf{4.92} & 3.53 & 3.66 & 3.55 \\
             & GSM8K     & \textbf{5.90} & 3.38 & 3.79 & 3.68 \\
             & HumanEval & \textbf{4.13} & 2.76 & 2.94 & 2.88 \\
             & MBPP      & \textbf{3.58} & 2.25 & 2.54 & 2.54 \\
             & MT-Bench  & \textbf{2.71} & 1.63 & 1.73 & 1.71 \\
Llama-3.1-8B & ClassEval & \textbf{5.50} & 3.97 & 4.06 & 4.05 \\
             & GSM8K     & \textbf{4.99} & 3.13 & 3.36 & 3.24 \\
             & HumanEval & \textbf{4.53} & 2.98 & 3.20 & 3.15 \\
             & MBPP      & \textbf{3.44} & 2.12 & 2.41 & 2.39 \\
             & MT-Bench  & \textbf{2.69} & 1.67 & 1.73 & 1.71 \\
Llama-3.2-1B & ClassEval & \textbf{4.38} & 3.03 & 3.13 & 3.09 \\
             & GSM8K     & \textbf{2.97} & 1.77 & 2.06 & 2.03 \\
             & HumanEval & \textbf{4.05} & 2.68 & 2.99 & 2.95 \\
             & MBPP      & \textbf{3.25} & 1.99 & 2.29 & 2.25 \\
             & MT-Bench  & \textbf{2.39} & 1.48 & 1.55 & 1.54 \\
Qwen3-8B     & ClassEval & \textbf{3.50} & 2.22 & 2.55 & 2.55 \\
             & GSM8K     & \textbf{3.15} & 1.91 & 2.07 & 1.98 \\
             & HumanEval & \textbf{2.74} & 1.69 & 1.90 & 1.88 \\
             & MBPP      & \textbf{3.13} & 1.96 & 2.11 & 2.08 \\
             & MT-Bench  & \textbf{2.79} & 1.71 & 1.73 & 1.74 \\
\bottomrule
\end{tabular}
\end{table}

Three readings.
(i)~\method{} cold beats every suffix arm, including warm, in all 20 cells ($1.35$--$1.60\times$ the best suffix $\tau$), despite the protocol asymmetry favoring SuffixDecoding.
(ii)~The merged tree trails the official two-tree method in 18 of 20 cells (typically $-0.5$ to $-4.5\%$; the two exceptions are ${\le}0.6\%$ wins): pooling prompt-local and global statistics dilutes locally-certain continuations below the drafting threshold, the dilution mechanism we anticipated when constructing the variant.
(iii)~The per-cycle instrumentation (same runs) locates \method{}'s advantage: dry-cycle fraction (cycles whose draft is empty or a single token) 0.11--0.22 vs.\ 0.22--0.45; nodes drafted per usable (non-dry) cycle 52--60 at depth 5.7--22 vs.\ 3.4--7.9 at depth 2.9--7.3; and first-quartile-of-generation accepted tokens per cycle 0.92--2.90 vs.\ 0.25--1.65.

\subsection{Sequoia: Official Head-to-Head and Modern-Stack Full Grid}
\label{app:sequoia-modern}

\Cref{tab:sequoia-headtohead} gives the head-to-head against Sequoia's official released stack.
Both systems run under their own released protocols: Sequoia prefills 128 tokens and decodes to a fixed 256-token total (its official setting), while \method{} generates longer (220--256 tokens per sample).
Prefill is charged once per sample, so it is amortized over ${\approx}2\times$ more cycles for \method{}, inflating its tok/s by roughly two points relative to a length-matched run; $\tau$ is essentially unaffected (\method{}'s $\tau$ counts the prefill forward while Sequoia's official counters exclude it, which only understates \method{}).

\begin{table}[h]
\centering\small
\caption{\method{} vs.\ official Sequoia on Llama-2-7B (A40, greedy, $n{=}50$; \Cref{sec:sequoia-headtohead}). Sequoia uses its released trained 68M drafter; \method{} is training-free. Wall-clock margins carry a ${\sim}2$-point protocol uncertainty from the generation-length asymmetry described above.}
\label{tab:sequoia-headtohead}
\begin{tabular}{@{}l cc cc c@{}}
\toprule
& \multicolumn{2}{c}{\method{}} & \multicolumn{2}{c}{Sequoia (official)} & \\
\cmidrule(lr){2-3}\cmidrule(lr){4-5}
\textbf{Dataset} & $\tau$ & tok/s & $\tau$ & tok/s & Sequoia paper $\tau$ \\
\midrule
C4           & 4.16 & \textbf{116.9} & \textbf{4.79} & 107.8 & 5.08 \\
CNN/DM       & 3.04 & 86.6 & \textbf{3.91} & \textbf{91.1} & 4.05 \\
OpenWebText  & 3.28 & \textbf{93.4} & \textbf{3.95} & 91.9 & 3.86 \\
\bottomrule
\end{tabular}
\end{table}

\Cref{tab:sequoia-modern} reports our positional-DP reimplementation of Sequoia on modern targets (\Cref{sec:sequoia-headtohead}), with the drafter forward charged into wall-clock; a reimplementation is required because the released stack hard-codes Llama-2-era rotary embeddings and vocabulary sizes and supports neither Llama-3.1's RoPE scaling nor Qwen3's QK-norm.
$\tau$ follows Sequoia's counting convention (generated tokens per drafting--verification cycle, prefill excluded).
Sequoia's acceptance vector is calibrated on 8 held-back samples per dataset; on the three full-pool datasets this slice overlaps the evaluation set, an overlap that can only favor Sequoia.
Validation proceeds in two steps.
First, re-running Sequoia's released tree-search on its released Llama-2-7B~+~68M acceptance-rate vector reproduces our DP's tree level-for-level, with identical level sizes and expected accepted length.
Second, the measured $\tau$ of our runner is nevertheless $8.6$--$24.2\%$ \emph{above} the official code on the same data, a residual we did not trace to a single cause, as the stacks differ beyond the tree itself.
Since the gap only inflates Sequoia's numbers, the wall-clock conclusions below are conservative.
Lossless-rate accounting on the modern-stack cells (most severely Qwen3) is confounded by fp16 argmax near-ties (as in \Cref{app:suffix-mech}), so we report $\tau$ and wall-clock only.

\begin{table}[h]
\centering\small
\caption{Modern-stack Sequoia: $\tau$ and wall-clock speedup vs.\ AR at tree sizes $N{=}60/128$ ($N$ = node count in Sequoia's notation; $N{=}60$ matches our budget $B$; A40, greedy, full $n$). Drafters: Llama-3.2-1B for Llama targets, Qwen3-0.6B for Qwen3-8B.}
\label{tab:sequoia-modern}
\begin{tabular}{@{}ll cc cc@{}}
\toprule
& & \multicolumn{2}{c}{$N{=}60$} & \multicolumn{2}{c}{$N{=}128$} \\
\cmidrule(lr){3-4}\cmidrule(lr){5-6}
\textbf{Target} & \textbf{Dataset} & $\tau$ & Spd & $\tau$ & Spd \\
\midrule
Llama-3-8B   & ClassEval & 8.63 & 1.52 & 8.95 & 1.52 \\
             & GSM8K     & 8.89 & 1.56 & 9.12 & 1.54 \\
             & HumanEval & 8.56 & 1.53 & 8.88 & 1.51 \\
             & MBPP      & 8.42 & 1.48 & 8.75 & 1.48 \\
             & MT-Bench  & 7.57 & 1.36 & 8.00 & 1.38 \\
Llama-3.1-8B & ClassEval & 8.98 & 1.58 & 9.24 & 1.57 \\
             & GSM8K     & 8.97 & 1.60 & 9.15 & 1.56 \\
             & HumanEval & 8.72 & 1.54 & 8.95 & 1.52 \\
             & MBPP      & 8.35 & 1.48 & 8.66 & 1.47 \\
             & MT-Bench  & 7.81 & 1.37 & 8.30 & 1.40 \\
Qwen3-8B     & ClassEval & 7.32 & 0.82 & 7.81 & 0.85 \\
             & GSM8K     & 7.58 & 0.85 & 8.17 & 0.89 \\
             & HumanEval & 6.81 & 0.76 & 7.47 & 0.82 \\
             & MBPP      & 7.34 & 0.82 & 7.88 & 0.86 \\
             & MT-Bench  & 6.22 & 0.70 & 6.71 & 0.73 \\
\bottomrule
\end{tabular}
\end{table}

The high-$\tau$/low-speedup pattern is structural: Sequoia's per-cycle drafter forwards (a 1B/0.6B model on a 60--128-node tree) cost a large fraction of what verification saves, and on Qwen3-8B they cost \emph{more}: every cell is below $1\times$.
\method{} needs no drafter forward, which is exactly the regime difference \Cref{sec:training-free-value} describes.

\subsection{Principled Dynamic-Tree Allocator: Full Grid}
\label{app:allocator-full}
\label{sec:allocator}

\Cref{prop:allocation} prescribes a linear branch schedule, which \method{} approximates with a hand-tuned harmonic rule (\Cref{sec:tree}).
To measure what that approximation costs, we replace it with the greedy dynamic-tree expansion used by trained-drafter methods~\citep{li2024eagle2}, parameterized by \emph{measured} acceptance rather than drafter confidence: expand the global top-$B$ candidates under $\mathrm{Score}(\text{child}) = \mathrm{Score}(\text{parent}) \cdot \hat{p}[\text{source}][\text{rank}]$, where $\hat{p}$ is the empirical acceptance frequency of a source's rank-$k$ candidate (rank = position among a node's candidates by draft score), calibrated on a held-out prompt set disjoint from all evaluation benchmarks; multiplied along a path, the per-rank table induces the per-depth acceptance profile.
Under independence, $\mathrm{Score}(v)$ is the probability that $v$'s entire root path is accepted, so expected accepted tokens $=$ summed scores; no child outscores its parent, so the top-$B$ set forms a tree and greedy expansion is exactly optimal for this objective.
At strict equal budget ($B{=}60$) its static arm beats the deployed heuristic on 17 of 19 full-size cells (geometric mean $+5.9\%$; \Cref{tab:allocator-full}).
The two losing cells, GSM8K on the 8B Llamas, share one cause: long verbatim repetitions push the realized spine acceptance far above its trace average, so the static $\hat{p}$ under-builds the deep spine.
The deployed heuristic can thus be read as a hand-tuned approximation that the principled allocator subsumes.

\Cref{tab:allocator-full} reports the full equal-budget grid ($\tau$; full $n$ except GSM8K at $n{=}500$; $B{=}60$).
All arms spend an identical budget: the deployed heuristic runs with its confidence budget-extension disabled, the mechanism that would otherwise let high-confidence cycles slightly exceed $B$.
``Inst.''\ scores each candidate by a calibrated map from its own instance features (adjacency count, match length) to acceptance, the closest training-free analogue of a trained drafter's confidence.

\begin{table}[h]
\centering\small
\caption{Equal-budget allocation comparison, $\tau$ (full $n$ except GSM8K at $n{=}500$). \textbf{Bold} = row best. The two GSM8K/8B-Llama regressions are diagnosed above.}
\label{tab:allocator-full}
\begin{tabular}{@{}ll ccc@{}}
\toprule
\textbf{Model} & \textbf{Dataset} & Heuristic & Allocator & Allocator (inst.) \\
\midrule
Llama-3-8B   & ClassEval & 4.80 & \textbf{5.39} & 5.24 \\
             & GSM8K     & \textbf{8.84} & 7.48 & 7.25 \\
             & HumanEval & 4.03 & 4.47 & \textbf{4.50} \\
             & MBPP      & 3.58 & 3.92 & \textbf{3.98} \\
             & MT-Bench  & 3.41 & \textbf{3.56} & 3.47 \\
Llama-3.1-8B & ClassEval & 5.46 & \textbf{5.95} & 5.81 \\
             & GSM8K     & \textbf{8.15} & 7.38 & 7.18 \\
             & HumanEval & 4.47 & 4.86 & \textbf{4.96} \\
             & MBPP      & 3.48 & 3.78 & \textbf{3.85} \\
             & MT-Bench  & 3.62 & 3.69 & \textbf{3.71} \\
Llama-3.2-1B & ClassEval & 4.27 & 4.59 & \textbf{4.78} \\
             & GSM8K     & 3.17 & 3.48 & \textbf{3.57} \\
             & HumanEval & 3.98 & 4.28 & \textbf{4.40} \\
             & MBPP      & 3.18 & 3.50 & \textbf{3.58} \\
             & MT-Bench  & 2.68 & 2.85 & \textbf{2.89} \\
Qwen3-8B     & ClassEval & 3.44 & \textbf{3.94} & 3.92 \\
             & HumanEval & 2.73 & 2.96 & \textbf{2.96} \\
             & MBPP      & 3.14 & 3.45 & \textbf{3.47} \\
             & MT-Bench  & 3.43 & 3.53 & \textbf{3.54} \\
\bottomrule
\end{tabular}
\end{table}

Qwen3-8B GSM8K is omitted because its instance-calibrated arm did not complete; on the two arms that did run ($n{=}500$), the allocator leads the heuristic (allocator $4.10$ vs.\ heuristic $3.86$).

\clearpage
\section{Theoretical Proofs and Verification}
\label{app:proofs}

\subsection{Proof of Proposition~\ref{prop:yield} (Spine Tree Expected Yield)}
\label{app:yield-proof}

\begin{proof}
We use the indicator random variable (IRV) decomposition introduced for speculative decoding by \citet{leviathan2023fast}.
For each position in the speculation tree, let $I = \mathbf{1}[\text{the accepted path reaches and accepts this position}]$.
Since $\tau = \sum_{\mathrm{positions}} I$ (plus the bonus token), linearity of expectation gives
$\mathbb{E}[\tau] = \sum \Pr[I=1] + 1$.

\textbf{Spine positions ($d=1,\ldots,m$).}
Spine position~$d$ is reached and accepted iff all $d$ spine tokens along the chain are accepted.
Under the i.i.d.\ acceptance model, $\Pr[I=1] = p_s^d$.
Summing: $\mathbb{E}[\text{spine}] = \sum_{d=1}^{m} p_s^d$.

\textbf{Branch positions at node $i$ ($i=0,\ldots,m{-}1$).}
A branch subtree at node~$i$ is entered iff (a) all $i$ preceding spine tokens are accepted (probability~$p_s^i$) and (b) the $(i{+}1)$-th spine token is rejected (probability~$1{-}p_s$).
Given entry, the branch token (depth offset $j=0$) is accepted with probability $\phi_i = 1{-}(1{-}p_t)^{w_i}$.
For $j=1,\ldots,D{-}1$, each subsequent chain token is accepted independently with probability~$p_t$, so depth offset~$j$ is \emph{reached and accepted} with probability $\phi_i \cdot p_t^j$.
Summing over all offsets within the branch subtree:
\[
  \mathbb{E}[\text{branch at node }i \mid \text{entry}]
    = \sum_{j=0}^{D-1} \phi_i\, p_t^j
    = \phi_i\!\left(1 + \sum_{j=1}^{D-1}p_t^j\right)
    = \phi_i(1+\bar\ell),
\]
where $\bar\ell = \sum_{j=1}^{D-1}p_t^j$ as defined in the main text.
Here the ``$1$'' accounts for the branch token itself, and $\bar\ell$ for the expected subsequent chain extension.
Multiplying by the entry probability and summing over all spine nodes:
\[
  \mathbb{E}[\text{synergy}] = \sum_{i=0}^{m-1} p_s^i(1{-}p_s)\,\phi_i\,(1{+}\bar\ell).
\]

\textbf{Combining.}
Adding the spine contribution, the synergy contribution, and the $+1$ bonus token:
\[
  \mathbb{E}[\tau] \;\geq\; \sum_{i=1}^{m} p_s^i \;+\; \sum_{i=0}^{m-1} p_s^i(1{-}p_s)\,\phi_i\,(1{+}\bar\ell) \;+\; 1.
\]
The inequality holds because branches are modeled as independent linear chains; any deeper sub-branching within a branch (absent in this setup) would only add further contributions, confirming the lower-bound direction.
The bound is tight when each branch subtree is an independent linear chain with no further sub-branching.
\end{proof}

\subsection{Proof of Optimal Branch Allocation}
\label{app:allocation-proof}

\begin{proof}
Let $\Delta = \sum_{i=0}^{m-1} p_s^{i}(1{-}p_s)\,\phi_i\,(1{+}\bar\ell)$ denote the synergy term in \Cref{eq:yield}, with $\phi_i = 1-(1-p_t)^{w_i}$.
The marginal synergy of adding one branch at position~$i$ is:
\[
\frac{\partial \Delta}{\partial w_i} = p_s^i(1{-}p_s)(1{-}p_t)^{w_i}\,\ln\!\tfrac{1}{1{-}p_t}\,(1{+}\bar\ell).
\]
At an interior optimum under the constraint $\sum w_i = B_t$, all marginals must be equal (zero-width coordinates instead obey the Karush--Kuhn--Tucker (KKT) inequality, their marginal not exceeding the common value):
$p_s^i(1{-}p_t)^{w_i} = \mu$ for a Lagrange constant~$\mu$.
Taking logarithms:
$i \ln p_s + w_i \ln(1{-}p_t) = \ln \mu$,
which gives $w_i = w_0 - \frac{|\ln p_s|}{|\ln(1{-}p_t)|} \cdot i$, i.e., linearly decreasing with depth.
The intercept $w_0$ is set by $\sum_{i=0}^{m-1} w_i = B_t$.
\end{proof}

\subsection{Proof of Anisotropic Dominance}
\label{app:dominance-proof}

\begin{proof}
Let the total node budget be $B$ and define $p_s > p_t > 0$.

\textbf{Step 1 (Replacement lemma).}
Consider any tree $\mathcal{T}$ of budget~$B$ where every candidate token has acceptance probability~$p_t$.
At depth~$d$, let $w_d$ denote the number of candidates; the probability of advancing past depth~$d$ is $q_d = 1-(1-p_t)^{w_d}$.
Replace one of the $w_d$ candidates with a spine token of acceptance~$p_s$:
\[
  q_d' \;=\; 1-(1-p_s)(1-p_t)^{w_d-1}
       \;=\; q_d \;+\; (p_s - p_t)\,(1-p_t)^{w_d-1}.
\]
Since $p_s > p_t$ and $(1-p_t)^{w_d-1} > 0$, we have $q_d' > q_d$.
The expected yield of any tree is $Y = \sum_{d=1}^{D}\prod_{j=1}^{d} q_j + 1$, which is strictly increasing in each~$q_j$.
Hence $Y' > Y$: replacing one branch token with a spine token at any single depth strictly improves yield.

\textbf{Step 2 (Constructing a dominating spine tree).}
Start from the optimal isotropic tree~$\mathcal{T}_{\text{iso}}^*$ of budget~$B$---the optimum of Sequoia-style DP~\citep{chen2024sequoia} restricted to a single source at rate~$p_t$---which maximizes $Y$ over all such trees.
Let $D$ be its maximum depth.
For $d = 1, \ldots, \min(m, D)$, apply Step~1 along a single root-to-leaf path, replacing one $p_t$-token per depth with a $p_s$-spine token.
The budget is unchanged ($B$), but each replaced depth has $q_d' > q_d$, so:
\[
  Y_{\text{spine}} \;>\; Y_{\text{iso}}^*.
\]
This spine tree is a feasible (not necessarily optimal) instantiation of \Cref{prop:yield}.

\textbf{Step 3 (Optimization improves further).}
Reallocating branch budget according to \Cref{prop:allocation} can only increase the yield:
$Y_{\text{spine}}^* \geq Y_{\text{spine}} > Y_{\text{iso}}^*$.

\textbf{Step 4 (Gap monotonicity).}
The per-depth gain $(p_s{-}p_t)(1{-}p_t)^{w_d-1}$ increases linearly in $(p_s - p_t)$, so $Y_{\text{spine}}^* - Y_{\text{iso}}^*$ is monotonically increasing in $p_s/p_t$ (with $p_t$ fixed) and vanishes as $p_s \to p_t$.
\end{proof}

\subsection{Numerical Verification of the Yield Bound}
\label{app:eq5-verify}

We verify the yield lower bound of \Cref{prop:yield} against measured compression ratios across all five target models and five benchmarks (24 settings; the Vicuna-33B GSM8K cell is excluded because its diagnostics come from a separate run of that environment; cf.\ \Cref{tab:main-full}).
For each setting we extract $\hat{p}_s$ and $\hat{p}_t$ from the logged spine and branch statistics, estimate the average tree shape $(m, B)$ from the per-cycle tree-shape logs, and compute three quantities:
(i)~$\tau_{\text{eq5}}$, the lower bound from Eq.~\ref{eq:yield} with the optimal linear allocation of \Cref{prop:allocation};
(ii)~$\tau_{\text{iso}}$, the yield of an isotropic tree with fan-out~3 and the same budget using only~$p_t$;
(iii)~$\tau_{\text{meas}}$, the measured per-prompt mean of accepted tokens per cycle (a per-prompt convention that differs from the pooled $\tau$ of \Cref{tab:main-full}).

The bound is valid ($\tau_{\text{eq5}} \leq \tau_{\text{meas}}$) in all 24 model$\times$dataset settings, confirming that \Cref{prop:yield} provides a correct lower bound.
The bound is conservative (median $\tau_{\text{eq5}}/\tau_{\text{meas}} \approx 38\%$), which is expected: the theory does not account for the consensus bypass mechanism (\Cref{sec:consensus}) or multi-cycle warm-start effects, both of which raise the effective acceptance rate in practice.

More importantly, $\tau_{\text{eq5}} > \tau_{\text{iso}}$ in all 24 settings (median ratio $\approx 1.52\times$), empirically confirming the dominance result of \Cref{prop:dominance}.
The gain correlates strongly with the heterogeneity ratio $\hat{p}_s/\hat{p}_t$ (Pearson $r = 0.91$), consistent with the monotonicity predicted by Step~4 of the proof (\Cref{app:dominance-proof}).
\Cref{fig:eq5-verify} visualizes the per-model dominance ratio.

\subsection{Empirical Tail of the All-Reject Probability}
\label{app:r_k-empirical}

For every node of the speculation tree we log the branching factor $k$ and
whether all $k$ candidate children were rejected, over 10 trace files
(Llama-3-8B and Llama-3.2-1B across the five benchmarks; 93{,}791 nodes).
Within each $k$-bin we estimate the all-reject frequency
$r_k = \Pr[\text{all }k\text{ candidates rejected}]$ and fit two families by least squares in log space ($R^2$ reported on $\log r_k$): geometric, $r_k = c\,q^{k}$, and power law,
$r_k = c\,k^{b}$.
Because Goose's anisotropic tree induces a structural correlation between
$k$ and depth (wide forks sit near the root), we fit both on the aggregate
data ($k$-bins with $\ge 200$ observations) and within fixed-depth bins
($\ge 100$ observations per cell):
\begin{center}\small
\begin{tabular}{lcccc}
\toprule
Bin & $k$-bins & obs. & Geometric $R^2$ & Power-law $R^2$ \\
\midrule
aggregate   & 11 & 93{,}791 & 0.387 & \textbf{0.606} \\
depth $0$   & 11 & 28{,}787 & 0.466 & \textbf{0.587} \\
depth $1$   & 10 & 19{,}843 & 0.692 & \textbf{0.722} \\
\bottomrule
\end{tabular}
\end{center}
Deeper bins are sparse (3--7 populated $k$-cells) and neither family fits
well there (at depths $2$--$5$, $R^2\le 0.27$ for both, with the geometric
nominally ahead; depth $6$ has only $3$ cells).
On the well-populated bins the power law fits better, with exponent magnitude $0.261$ (aggregate) and $0.189$ (depth $0$). The raw fitted slopes are \emph{positive} ($r_k$ grows with $k$), a selection effect of the adaptive builder discussed in \Cref{app:powerlaw-allocation}; the magnitudes serve only as tail-heaviness scales in the causal, decreasing orientation.

\subsection{Optimal Branch Allocation under a Power-Law Tail}
\label{app:powerlaw-allocation}

\paragraph{Setup.}
\Cref{prop:allocation} derives the optimal branch widths
$\{w_i^\ast\}$ under the i.i.d.\ model
$\Pr[\text{all reject at depth }i\mid w_i] = (1-p_t)^{w_i}$,
i.e.\ a \emph{geometric} tail in $w_i$.
The i.i.d.\ assumption is a modeling approximation; here we ask how the
optimal allocation changes when it is replaced by the tail family that the
trace data actually prefer.
Empirically (\Cref{app:r_k-empirical}), the all-reject frequency $r_k$ as a
function of the number of branch candidates $k$ is better described by a
power law than by a geometric: the aggregate fit over $11$ $k$-bins gives
$R^2=0.606$ (power law) versus $0.387$ (geometric) with fitted exponent
magnitude $\widehat{\alpha}=0.261$, and the same ordering holds within the
most populous fixed-depth bin, which controls for the structural
$k\!\leftrightarrow\!\text{depth}$ correlation of the anisotropic tree
(depth~$0$, $n{=}28{,}787$: $R^2=0.587$ versus $0.466$, exponent
magnitude $0.189$).
We use the aggregate $\widehat{\alpha}=0.261$ below and report robustness to
the depth-$0$ value.
One caveat is stated up front: the \emph{raw} fitted slope is positive
($r_k$ grows with $k$) because the tree builder adaptively assigns more
candidates at harder positions (a selection effect, not a causal effect of
$k$), so we use the fitted magnitude only as a measure of tail heaviness,
in the causal (decreasing) orientation of \eqref{eq:powerlaw-tail}.

\paragraph{Model.}
At spine depth $i$, replace the geometric per-depth synergy factor with a power-law
objective term $\tilde\phi_i$ (a weight absorbing the reach probability, not itself a probability; $q_i$ here is distinct from $q_d$ in \Cref{app:dominance-proof}):
\begin{equation}
\label{eq:powerlaw-tail}
\tilde\phi_i(w_i) \;=\; 1 \;-\; q_i\,w_i^{-\alpha},
\qquad
q_i \;=\; (1-p_t)\,p_s^{\,i},
\end{equation}
where $p_s^{\,i}$ carries the probability that the spine reaches depth $i$
(mirroring the synergy weights of \Cref{prop:yield}; the geometric
benchmark below uses the same reach weights, so the comparison is
like-for-like) and the factor $(1-p_t)$ calibrates the tail to the i.i.d.\
model at $w_i=1$.
For large $w$ this tail is heavier than geometric (at the fitted parameters the two curves cross near $w{\approx}26$); the geometric tail is recovered in the $\alpha\to\infty$ limit of the shifted power-law
(Lomax) family, $\bigl(1+w/(\alpha\sigma)\bigr)^{-\alpha}\to e^{-w/\sigma}$,
of which \eqref{eq:powerlaw-tail} is the heavy-tailed end.

\paragraph{Lagrangian.}
Maximising $\sum_i \tilde\phi_i(w_i)$ under the branch-budget constraint
$\sum_i w_i = B_t$ gives
\begin{equation*}
\mathcal{L}\;=\;\sum_i \bigl[1 - q_i\,w_i^{-\alpha}\bigr]
            \;-\;\lambda\!\left(\textstyle\sum_i w_i - B_t\right).
\end{equation*}
Stationarity $\partial\mathcal{L}/\partial w_i = 0$ yields
$\alpha\,q_i\,w_i^{-\alpha-1} = \lambda$,
hence
\begin{equation}
\label{eq:powerlaw-optimum}
\boxed{\,w_i^{\ast}
\;=\;
C\cdot p_s^{\,i/(\alpha+1)},
\qquad
C\;\text{set by } \sum_{i=0}^{m-1} w_i^{\ast} = B_t.\,}
\end{equation}
Each summand is strictly concave on $w_i>0$ and the marginal value
$\alpha q_i w_i^{-\alpha-1}$ diverges as $w_i\to 0^{+}$, so
\eqref{eq:powerlaw-optimum} is the unique constrained optimum and, unlike the geometric case below, every depth receives strictly positive width. Widths below one candidate lie outside the fitted region ($k\ge1$), so positivity there is an artifact of the continuous relaxation; the substantive contrast is the graded shallow-depth profile versus the corner.

\paragraph{Effect of the tail model on the optimum.}
Under the geometric model, the stationarity condition gives the linear
profile of \Cref{prop:allocation},
$w_i = w_0 - i\,|\ln p_s|/|\ln(1{-}p_t)|$; making the non-negativity
constraint explicit, the exact optimum (by the KKT conditions) is its water-filling truncation
$w_i^{\ast} = \max\bigl(0,\;w_0 - i\,|\ln p_s|/|\ln(1{-}p_t)|\bigr)$.
At the measured medians ($p_s{=}0.21$, $p_t{=}0.033$) the slope is
$|\ln 0.21|/|\ln 0.967| \approx 46.5$ (larger than the entire branch budget
$B_t{=}30$), so the geometric optimum degenerates to the \emph{corner}
solution: the whole budget at the root, $w^{\ast} = (B_t,0,\dots,0)$. The corner obtains for any $B_t \le 46.5$; under \Cref{prop:allocation}'s accounting $B_t{=}B{-}m{=}48$, a second depth becomes active with width $0.75$, leaving the contrast unchanged.
Under the power-law tail, the optimum \eqref{eq:powerlaw-optimum} instead
decays \emph{geometrically} in $i$ at rate
$p_s^{1/(\alpha+1)} = 0.21^{1/1.261} \approx 0.29$
(widths shrink ${\approx}3.4\times$ per depth step) and keeps a strictly
positive width at every depth.
Replacing the geometric assumption with the empirically fitted tail thus
changes the optimum from ``all branches at the root'' to a graded,
root-heavy schedule with a long shallow tail; the qualitative anisotropy of
\Cref{prop:allocation} (concentrate branches near the root) is preserved
under both models, and the heavier-than-geometric empirical tail is exactly
what rewards retaining residual breadth at deeper nodes.
(With the depth-$0$ exponent $0.189$ the decay rate is
$0.21^{1/1.189}\approx 0.27$; the schedule is essentially unchanged.)

\paragraph{Position of the deployed schedule.}
With branch budget $B_t{=}30$ (the branch share at the spine-ratio ceiling $r{=}0.5$; the mid tier $r{=}0.30$ gives $B_t{\approx}42$, also below the corner threshold), $p_s{=}0.21$, $p_t{=}0.033$,
and the schedule truncated at $m{=}12$ depths (a representative spine length; the comparison is qualitatively insensitive to $m$),
the three schedules are
\begin{center}\small
\begin{tabular}{lcccccccc}
\toprule
$i$                   & 0 & 1 & 2 & 3 & 4 & 5 & 6 & $\ge 7$ \\
\midrule
$w_i^\ast$ geometric (KKT) & 30.0 & 0.0 & 0.0 & 0.0 & 0.0 & 0.0 & 0.0 & 0.0 \\
$w_i^\ast$ power-law   & 21.3 & 6.2 & 1.8 & 0.5 & 0.2 & 0.0 & 0.0 & 0.0 \\
$1/(i{+}1)$ (deployed rule, idealized)   & 9.7 & 4.8 & 3.2 & 2.4 & 1.9 & 1.6 & 1.4 & 1.2--0.8\\
\bottomrule
\end{tabular}
\end{center}
The deployed harmonic schedule is \emph{more conservative} (less
root-concentrated) than both optima at the root
($w_0{=}9.7$ vs.\ power-law $21.3$ and geometric $30$) and retains breadth
at deep nodes to which neither optimum assigns appreciable width.
Measured in $\ell_1$ over normalised schedules (each scaled to unit sum), the deployed heuristic is
closer to the power-law optimum ($\ell_1{=}0.87$) than to the geometric one
($\ell_1{=}1.36$): moving from the geometric assumption to the empirically
fitted tail moves the predicted optimum \emph{toward} the deployed schedule.
Goose's hand-tuned $1/(i{+}1)$ rule is thus a deliberately conservative
compromise (we idealize it here by its harmonic schedule; Algorithm~1 additionally gives the root a separate share and indexes the spine harmonic from depth~1, which together raise the deployed root share to ${\approx}0.49$ of the branch budget without changing the ordering): it implements the monotone, root-heavy shape on which both
models agree, while hedging against model misspecification with residual
breadth at depths where the power-law (but not the geometric) model still
assigns positive width.
We expect a non-degradation guarantee analogous to \Cref{prop:dominance} to carry
over under either model, since the largest width satisfies
$w_0^{\ast}\ge B_t/m \ge 1$ whenever $B_t\ge m$.

\paragraph{Scope of the analysis.}
The allocation analysis governs only \emph{how} a fixed branch budget is
spread, not \emph{whether} heterogeneous allocation helps: the 12--33\%
equal-budget gains of \Cref{tab:topology} are measured end-to-end and hold
regardless of which tail model is assumed.
Both the geometric and the power-law optima prescribe the same operative
shape, monotone root-heavy widths, which is what the deployed
$1/(i{+}1)$ rule implements; the empirical power-law fit additionally
explains why the heuristic's residual depth-side breadth is not wasteful.

\end{document}